\title{Graph-based Discriminators: Sample Complexity and Expressiveness}
\author{
  Roi Livni \\
  Tel Aviv University\\
  \texttt{rlivni@tauex.tau.ac.il} \\
   \and
   Yishay Mansour \\
   Tel Aviv University \\
   \texttt{mansour.yishay@gmail.com} \\
}
\newcommand{\ignore}[1]{}
\renewcommand{\L}{\mathbb{E}}
\newcommand{\V}{\mathcal{V}}
\newcommand{\G}{\mathcal{G}}
\renewcommand{\H}{\mathcal{H}}
\newcommand{\D}{\mathcal{D}}
\newcommand{\X}{\mathcal{X}}
\renewcommand{\P}{P}
\newcommand{\ipm}{{\mathrm{IPM}}}
\newcommand{\yes}{EQUIVALENT~}
\newcommand{\no}{DISTINCT~}
\newcommand{\EE}{\mathop\mathbb{E}}
\newcommand{\vv}{\mathbf{v}}
\newcommand{\uu}{\mathbf{u}}
\renewcommand{\aa}{\mathbf{a}}
\newtheorem{definition}{Definition}
\newtheorem{lemma}{Lemma}
\newtheorem{theorem}{Theorem}
\newtheorem{corollary}{Corollary}
\newcommand{\tv}[2]{\mathrm{TV}(#1,#2)}
\newcommand{\vcdim}{\mathrm{VC}}
\newcommand{\gvcdim}{\mathrm{gVC}}
\newcommand{\vcd}{\rho}
\newcommand{\roi}[1]{\textcolor{red}{Roi--#1}}
\newtheorem{claim}{Claim}
\newcommand{\fullversion}[1]{#1}
\newcommand{\procversion}[1]{#1}
\date{}
\begin{document}
\maketitle

\begin{abstract}
A basic question in learning theory is to identify if two
distributions are identical when we have access only to examples sampled from the distributions.
This basic task is considered, for example, in the context of
Generative Adversarial Networks (GANs), where a discriminator is trained to distinguish between a real-life distribution and a synthetic distribution.
Classically, we use a hypothesis class $H$ and claim that the two
distributions are distinct if for some $h\in H$ the expected value
on the two distributions is (significantly) different.

Our starting point is the following fundamental problem: "is having
the hypothesis dependent on more than a single random example
beneficial". To address this challenge we define $k$-ary based
discriminators, which have a family of Boolean $k$-ary functions
$\G$. Each function $g\in \G$ naturally defines a hyper-graph,
indicating whether a given hyper-edge exists. A function $g\in \G$
distinguishes between two distributions, if the expected value of
$g$, on a $k$-tuple of i.i.d examples, on the two distributions is
(significantly) different.

We study the expressiveness of families of $k$-ary functions,
compared to the classical hypothesis class $H$, which is $k=1$. We
show a separation in expressiveness of $k+1$-ary versus $k$-ary
functions. This demonstrate the great benefit of having $k\geq 2$ as
distinguishers.

For $k\geq 2$ we introduce a notion similar to the VC-dimension, and
show that it controls the sample complexity. We proceed and provide upper and
lower bounds as a function of our extended notion of VC-dimension.

\end{abstract}

\section{Introduction}
The task of discrimination consists of a \emph{discriminator} that
receives finite samples from two distributions, say $p_1$ and $p_2$,
and needs to certify whether the two distributions are distinct.
Discrimination has a central role within the framework of Generative
Adversarial Networks \cite{goodfellow2014generative}, where a
discriminator trains a neural net to distinguish between samples
from a real-life distribution and samples generated synthetically by
another neural network, called a \emph{generator}.

A possible formal setup for discrimination identifies the discriminator with some
distinguishing class $\D=\{f:X\to \mathbb{R}\}$ of
\emph{distinguishing functions}. In turn, the discriminator wishes
to find the best $d\in \D$ that distinguishes between the two
distributions. Formally, she wishes to find $d\in \D$ such that\footnote{
Note that with such $d$ at hand, with an order of $O(1/\epsilon^2)$
examples one can verify if any discriminator in the class certifies
that the two distributions are distinct.}
\begin{equation}\label{eq:distinct}
\left|\EE_{x\sim p_1}[d(x)]-\EE_{x\sim p_2}[d(x)] \right| >\sup_{d^*\in \D}\left|\EE_{x\sim p_1}[d^*(x)]-\EE_{x\sim p_2}[d^*(x)] \right|
-\epsilon.
\end{equation}
For examples, in GANs, the class of distinguishing functions we will consider could be the class of neural networks trained by the discriminator.

The first term in the RHS of \cref{eq:distinct} is often referred to
as the \emph{Integral Probability Metric} (IPM distance) w.r.t a
class $\D$ \cite{muller1997integral}, denoted $\ipm_{\D}$. As such, we can think of the
discriminator as computing the $\ipm_{\D}$ distance.

Whether two, given, distributions can be distinguished by the
discriminator becomes, in the IPM setup, a property of the
distinguishing class. Also, the number of examples needed to be
observed will depend on the class in question. Thus, if we take a
large expressive class of distinguishers, the discriminator can
potentially distinguish between any two distributions that are far
in total variation. In that extreme, though, the class of
distinguishers would need to be very large and in turn, the number
of samples needed to be observed scales accordingly. One could also
choose a ``small" class, but at a cost of smaller distinguishing
power that yields smaller IPM distance.

For example, consider two distributions over $[n]$ to be
distinguished. We could choose as a distinguishing class the class
of \emph{all} possible subsets over $n$. This distinguishing class
give rise to the total variation distance, but the sample complexity
turns out to be $O(n)$. Alternatively we can consider the class of
\emph{singletones}: This class will induce a simple IPM distance,
with graceful sample complexity, however in worst case the IPM
distance can be as small as $O(1/n)$ even though the total variation
distance is large.

Thus, IPM framework initiates a study of generalization complexity
where we wish to understand what is the expressive power of each
class and what is its sample complexity.

For this special case that $\D$ consists of Boolean functions, the
problem turns out to be closely related to the classical statistical
learning setting and prediction \cite{vapnik1968uniform}. The sample
complexity (i.e., number of samples needed to be observed by the
discriminator) is governed by a combinatorial measure termed
\emph{VC dimension}. Specifically, for the discriminator to be able
to find a $d$ as in \cref{eq:distinct}, she needs to observe order
of $\Theta(\frac{\vcd}{\epsilon^2})$ examples, where $\vcd$ is the
VC dimension of the class $\D$ \cite{blumer1989learnability,
vapnik1968uniform}.

In this work we consider a natural extension of this framework to
more sophisticated discriminators: For example, consider a
discriminator that observes pairs of points from the distribution
and checks for collisions -- such a distinguisher cannot apriori be
modeled as a test of Boolean functions, as the tester measures a
relation between two points and not a property of a single point. The collision test has indeed been used, in the context of synthetic data generation, to evaluate the \emph{diversity} of the synthetic distribution \cite{arora2017gans}.

More generally, suppose we have a class of $2$-ary Boolean functions: $\G=\{g: g(x_1,x_2)\to \{0,1\}\}$ and the discriminator wishes to (approximately) compute
\begin{align}\label{eq:graph_distinct} \sup_{g\in \G}\left|\EE_{(x_1,x_2)\sim p^2_1}[g(x_1,x_2)]-\EE_{(x_1,x_2)\sim p^2_2}[g(x_1,x_2)] \right|.\end{align}

Here $p^2$ denotes the product distribution over $p$. More
generally, we may consider $k$-ary mappings, but for the sake of
clarity, we will restrict our attention in this introduction to
$k=2$. 

Such $2$-ary Boolean mapping can be considered as graphs where
$g(x_1,x_2)=1$ symbolizes that there exists an edge between $x_1$
and $x_2$ and similarly $g(x_1,x_2)=0$ denotes that there is no such
edge. The collision test, for example, is modelled by a graph that contains only self--loops. We thus call such multi-ary statistical tests \emph{graph-based
distinguishers}.

Two natural question then arise

\begin{enumerate}
\item\label{q:1} Do graph--based discriminators have any added distinguishing power over classical discriminators?
\item\label{q:2} What is the sample complexity of graph--based discriminators?
\end{enumerate}
With respect to the first question we give an affirmative answer and
we show a separation between the distinguishing power of
graph--based discriminators and classical discriminators. As to the
second question, we introduce a new combinatorial measure (termed
\emph{graph VC dimension}) that governs the sample complexity of
graph--based discriminators -- analogously to the VC characterization
of the sample complexity of classical discriminators. We next elaborate on each of these two results.

As to the distinguishing power of graph--based discriminators, we
give an affirmative answer in the following sense: We show that
there exists a single graph $g$ such that, for any distinguishing
class $\D$ with bounded VC dimension, and $\epsilon$, there are two
distributions $p_1$ and $p_2$ that are $\D$--indistinguishable but
$g$ certifies that $p_1$ and $p_2$ are distinct. Namely, the
quantity in \cref{eq:graph_distinct} is at least $1/4$ for
$\G=\{g\}$.

This result may be surprising. It is indeed known that for any two
distributions that are $\epsilon$--far in total variation, there
exists a boolean mapping $d$ that distinguishes between the two
distributions. In that sense, distinguishing classes are known to be
universal. Thus, asymptotically, with enough samples any two
distribution can be ultimately distinguished via a standard
distinguishing function.

Nevertheless, our result shows that, given finite data,   the
restriction to classes with finite capacity is limiting, and there
could be graph-based distinguishing functions whose distinguishing
power is not comparable to \emph{any} class with finite capacity. We
stress that the same graph competes with \emph{all} finite--capacity
classes, irrespective of their VC dimension.

\ignore{The last result is given over infinite set of vertices: This is of
course necessary if we want to show such a strong separation.
Indeed, over a finite domain of size $n$, we can calculate the total
variation with sample complexity $O(n)$. We thus, also, complement
our results and give a more refined analysis that shows a separation
between graph based discriminators and classes with VC dimension
smaller than $O(\epsilon^2 \mathrm{poly}(\log n))$.}

With respect to the second question, we introduce a new VC-like
notion termed \emph{graph VC dimension} that extends naturally to
graphs (and hypergraphs). On a high level, we show that for a class of graph-based distinguishers with graph VC dimension $\vcd$, $O(\vcd)$ examples are sufficient for discrimination and that $\Omega(\sqrt{\vcd})$ examples are necessary. This leaves a gap of factor $\sqrt{\vcd}$ which we leave as an open question.

The notion we introduce is strictly weaker than the standard
VC--dimension of families of multi-ary functions, and the proofs we
provide do not follow directly from classical results on
learnability of finite VC classes \cite{vapnik1968uniform,
blumer1989learnability}. In more details, a graph-based distinguishing class $\G$ is  a
family of Boolean functions over the product space of vertices $\V$:
$\G \subseteq \{0,1\}^{\V^2}$. As such it is equipped with a VC
dimension, the largest set of pairs of vertices that is shattered by
$\G$.

It is not hard to show that finite VC is sufficient to achieve
finite sample complexity bounds over $2$-ary functions
\cite{clemenccon2008ranking}. It turns out, though, that it is not a
necessary condition: For example, one can show that the class of
\emph{k-regular graphs} has finite graph VC dimension but infinite
VC dimension. Thus, even though they are not learnable in the
standard PAC setting, they have finite sample complexity within the
framework of discrimination.

The reason for this gap, between learnability and discriminability,
is that learning requires uniform convergence with respect to any
possible distribution over pairs, while discrimination requires
uniform convergence only with respect to product distributions --
formally then, it is a weaker task, and, potentially, can be
performed even for classes with infinite VC dimension.

\subsection{Related Work}
The task of discrimination has been considered as early as the work
of Vapnik and Chervonenkis in \cite{vapnik1968uniform}. In fact,
even though Vapnik and Chervonenkis original work is often referred
in the context of prediction, the original work considered the
question of  when the empirical frequency of Boolean functions
converges uniformly to the true probability over a class of
functions. In that sense, this work can be considered as a natural
extension to $k$-ary functions and generalization of the notion of
VC dimension.

The work of \cite{clemenccon2008ranking, clemenccon2016scaling}
studies also a generalization of VC theory to multi-ary functions in the context of ranking tasks and U-statistics. They study the
standard notion of VC dimension. Specifically they  consider the
function class as Boolean functions over multi-tuples and the VC
dimension is defined by the largest set of multi-tuples that can be
shattered. Their work provides several interesting fast-rate
convergence guarantees. As discussed in the introduction, our notion
of capacity is weaker, and in general the results are incomparable.

\paragraph{GANs}
A more recent interest in discrimination tasks is motivated by the
framework of GANs, where a neural network is trained to distinguish
between two sets of data -- one is real and the other is generated
by another neural network called \emph{generator}. Multi-ary tests have been proposed to assess the quality of GANs
networks. \cite{arora2017gans} suggests birthday paradox to evaluate
\emph{diversity} in GANs. \cite{richardson2018gans} uses Binning to assess the solution proposed by GANs.

Closer to this work \cite{lin2018pacgan} suggests the use of a
discriminator that observes samples from the $m$-th product
distribution. Motivated by the problem of \emph{mode collapse} they
suggest a theoretical framework in which they study the algorithmic
benefits of such discriminators and observe that they can
significantly reduce mode collapse. In contrast, our work is less
concerned with the problem of mode collapse directly and we ask in
general if we can boost the distinguishing power of discriminators via multi-ary discrimination. Moreover, we provide
several novel sample complexity bounds.

\paragraph{Property Testing}
A related problem to ours is that of testing closeness of
distributions \cite{batu2000testing, goldreich2011testing}. Traditionally,
testing closeness of distribution is concerned with evaluating if two discrete distributions
are close vs. far/identical in \emph{total variation}.
\cite{goldreich2011testing}, motivated by graph expansion test,
propose a collision test to verify if a certain distribution is
close to uniform. Interestingly, a collision test is a graph-based
discriminator which turns out to be optimal for the
setting\cite{paninski2008coincidence}. Our sample--complexity lower
bounds are derived from these results. Specifically we reduce
discrimination to testing uniformity \cite{paninski2008coincidence}.
Other lower bounds in the literature can be similarly used to
achieve alternative (yet incomparable bounds) (e.g.
\cite{chan2014optimal} provides a $\Omega(n^{2/3}/\epsilon^{3/4})$
lower bounds for testing whether two distributions are far or
close).

In contrast with the aforementioned setup, here we do not measure distance between distributions in terms of total
variation but in terms of an IPM distance induced by a class of
distinguishers. The advantage of the IPM distance is that it
sometimes can be estimated with limited amount of samples, while the
total variation distance scales with the size of the support, which
is often too large to allow estimation.

Several works do study the question of distinguishing between two
distributions w.r.t a finite capacity class of tests, Specifically
the work of \cite{kothari2018agnostic} studies refutation algorithms
that distinguish between noisy labels and labels that correlate with
a bounded hypothesis class. \cite{vadhan2017learning} studies a
closely related question in the context of realizable PAC learning.
A graph-based discriminator can be directly turned to a refutation
algorithm, and both works of \cite{kothari2018agnostic,
vadhan2017learning} show  reductions from refutation to learning. In
turn, the agnostic bounds of \cite{kothari2018agnostic} can be
harnessed to achieve lower bounds for graph-based discrimination.
Unfortunately this approach leads to suboptimal lower bounds. It
would be interesting to see if one can improve the guarantees for
such reductions, and in turn exploit it for our setting.

\section{Problem Setup}
\subsection{Basic Notations -- Graphs and HyperGraphs}
Recall that a $k$-hypergraph $g$ consists of a a set $\V_g$ of
\emph{vertices} and a collection of non empty $k$--tuples over $\V$:
$E_g\subseteq \V^k$, which are referred to as \emph{hyperedges}. If
$k=2$ then $g$ is called a graph. $1$--hypergraphs are simply
identified as subsets over $\V$. We will normally use $d$ to denote
such $1$-hypergraphs and will refer to them as
\emph{distinguishers}. A distinguisher $d$ can be identified with a
Boolean function according to the rule:  $d(x)=1$ iff $x\in E_d$.

Similarly we can identify a $k$-hypergraph with a function $g: \V^k
\to \{0,1\}$. Namely, for any graph $g$ we identify it with the
Boolean function
\[g(v_1,\ldots, v_k)=
\begin{cases}
1 & (v_1,\ldots, v_k)\in E_g
\\
0 & \mathrm{else}
\end{cases}
\]

We will further simplify and assume that $g$ is \emph{undirected}, this means that for any permutation $\pi:[k]\to [k]$, we have that
\[g(v_{\pi(1)}, v_{\pi(2)},\ldots ,v_{\pi(k)})= g(v_1,\ldots, v_k).\]

We will call undirected $k$-hypergraphs, $k$-distinguishers. A collection of $k$-distinguishers over a common set of vertices $\V$ will be referred to as a $k$-\emph{distinguishing class}. If $k=1$ we will simply call such a collection \emph{a distinguishing class}. For $k>1$ we will normally denote such a collection with $\G$ and for $k=1$ we will often use the letter $\D$.

Next, given a distribution $\P$ over vertices and a $k$--hypergraph $g$ let us denote as follows the frequency of an edge w.r.t $\P$:

\[\L_{P}(g)= \EE_{\vv_{1:k}\sim  \P^k}\left[ g(\vv_{1:k})\right]=P^k\left[\{(\vv_1,\ldots,\vv_k): (\vv_1,\ldots, \vv_k)\in E_{g})\}\right],\]
 where we use the notation $\vv_{1:t}$ in shorthand for the sequence $(\vv_1,\ldots, \vv_t)\in \V^t$, and $P^k$ denotes the product distribution of $P$ $k$ times.

Similarly, given a sample $S=\{v_i\}_{i=1}^m$ we denote the empirical frequency of an edge:
\[\L_{S}(g)= \frac{1}{m^k} \sum_{\uu_{1:k}\in S^k} g(\uu_{1:k})= \frac{| \{(\uu_1,\ldots, \uu_k)\in E_g: \forall i,~ \uu_i \in S\}|}{m^k}\]

As a final set of notations: Given a $k$-hypergraph $g$ a sequence $\vv_{1:n}$ where $n<k$, we define a $k-n$--distinguisher $g_{\vv_{1:n}}$ as follows:
\[ g_{\vv_{1:n}}(\uu_{1:k-n}) = g(\vv_1,\ldots \vv_n, \uu_{1},\ldots \uu_{k-n}).\]

In turn, we define the following distinguishing classes: For every sequence $\vv_{1:n}$, $n<k$, the distinguishing class $\G_{\vv_{1:n}}$ is defined as follows:
\begin{align}\label{eq:Gv}
\G_{\vv_{1:n}}= \{g_{\vv_{1:n}}: g\in \G\}
\end{align}

Finally, we point out that we will mainly be concerned with the case that $|\V|\le \infty$ or $\V=\mathbb{N}$. However, all the results here can be easily extended to other domains as long as certain (natural) measurability assumptions are given to ensure that VC theory holds (see \cite{vapnik1968uniform, ben20152}).

\subsection{IPM distance}

Given a class of distinguishers $\D$ the induced IPM distance \cite{muller1997integral}, denoted by $\ipm_{\D}$, is a (pseudo)--metric between distributions over $\V$ defined as follows
\[\ipm_{\D}(p_1,p_2) = \sup_{d\in \D}|\L_{p_1}(d)-\L_{p_2}(d)|= \sup_{d\in \D}\left| \EE_{v\sim p_1}[d(v)]-\EE_{v\sim p_2}[d(v)]\right|.\]

The definition can naturally be extended to a general family of graphs, and we define:

\[ \ipm_{\G}(p_1,p_2)= \sup_{g\in \G}\left| \L_{p_1}(g)-\L_{p_2}(g)]\right|=\sup_{g\in \G}\left| \EE_{\vv_{1:k}\sim p_1^k} [g(\vv_{1:k})]-\EE_{\vv_{1:k}\sim p_2^k} [g(\vv_{1:k})]]\right|\]
Another metric we would care about is the \emph{total variation metric}. Given two distributions $p_1$ and $p_2$ the total variation distance is defined as:
\[ \tv{p_1}{p_2}= \sup_{E} |p_1(E)-p_2(E)|\] where $E\subseteq \V^{\{0,1\}}$ goes over all measurable events.

In contrast with an IPM distance, the total variation metric is
indeed a metric and any two distributions $p_1 \ne p_2$ we have that
$\tv{p_1}{p_2}>0$. In fact, for every distinguishing class $\D$,
$\ipm_{\D} \preceq \mathrm{TV}$.\footnote{we use the notation $f_1 \preceq
f_2$ to denote that for every $x,y$ we have $f_1(x,y)\le f_2(x,y)$.}

For finite classes of vertices $\V$, it is known that the total variation metric is given by
\[\tv{p_1}{p_2}=\frac{1}{2}\sum_{v\in \V}|p_1(v)-p_2(v)|.\] Further, if we let $\D= P(\V)$ the power set of $\V$ we obtain
\[\ipm_{P(\V)}(p_1,p_2)=\tv{p_1}{p_2}.\]

\subsection{Discriminating Algorithms}\label{sec:disciminating}
\begin{definition}
Given a distinguishing class $\G$ a $\G$-discriminating algorithm
$A$ with sample complexity $m(\epsilon,\delta)$ is an algorithm that
receives as input two finite samples $S=(S_1,S_2)$ of vertices and
outputs a hyper-graph $g_S^A\in \G$ such that:

If $S_1, S_2$ are drawn IID from some unknown distributions
$p_1,p_2$ respectively and $|S_1|,|S_2|> m(\epsilon,\delta)$ then
w.p. $(1-\delta)$ the algorithm's output satisfies:

\[|\L_{p_1}(g_S^A)-\L_{p_2}(g_S^A)|> \ipm_{\G}(p_1,p_2)-\epsilon.\]

The sample complexity of a class $\G$ is then given by the smallest
possible sample complexity of a $\G$-discriminating algorithm $A$.

A class $\G$ is said to be discriminable if it has finite
sample complexity. Namely there exists a discriminating algorithm
for $\G$ with sample complexity $m(\epsilon,\delta)<\infty$.
\end{definition}

\paragraph{VC classes are discriminable}
For the case $k=1$, discrimination is closely related to PAC learning. It is easy to see that a proper learning algorithm for a class $\D$ can be turned into a discriminator: Indeed, given access to samples from two distributions $p_1$ and $p_2$ we can provide a learner with labelled examples from a distribution $p$ defined as follows: $p(y=1)=p(y=-1)=\frac{1}{2}$ and $p(\cdot|y=1)=p_1$, and $p(\cdot|y=-1)=p_2$.
Given access to samples from $p_1$ and $p_2$ we can clearly generate IID samples from the distribution $p$. If, in turn, we provide a learner with samples from $p$ and it outputs a hypothesis $d\in \D$ we have that (w.h.p):
\begin{align*} |\L_{p_1}(d) - \L_{p_2}(d) |&= 2| \frac{1}{2}\EE_{(x,y)\sim p_1\times \{1\}} [yd(x)] + \frac{1}{2}\EE_{(x,y)\sim p_2\times \{-1\}} [yd(x)]| \\
&= 2|\EE_{(x,y)\sim p} [yd(x)]| \\
&= 2(1-2p(d(x)\ne y))\\
& \ge  2(1-2(\min_{d\in \D} p(d(x)\ne y)+\epsilon))\\
& = \max_{d\in \D}(2(|\EE_{(x,y)\sim p} yd(x)|-4\epsilon)\\
& = \max_{d\in \D} |\L_{p_1}(d) - \L_{p_2}(d) | - 4\epsilon\\
& = \ipm_{\D}(p_1,p_2) -4\epsilon
\end{align*}
One can also see that a converse relation holds, if we restrict
our attention to learning balanced labels (i.e.,
$p(y=1)=p(y=-1)$). Namely, given labelled examples from some balanced
distribution, the output of a discriminator is a predictor that
competes with the class of predictors induced by $\D$.

Overall, the above calculation, together with Vapnik and
Chervonenkis's classical result \cite{vapnik1968uniform} shows that
classes with finite VC dimension $\vcd$ are discriminable with
sample complexity $O(\frac{\vcd}{\epsilon^2})$.\footnote{Recall that
the VC dimension of a class $\D$ is the largest set that can be
shattered by $\D$ where a set $S\subseteq \V$ is said to be
shattered if $\D$ restricted to $S$ consists of $2^{|S|}$ possible
Boolean functions.} The necessity of finite VC dimension for
agnostic PAC-learning was shown in \cite{anthony2009neural}.
Basically the same argument shows that given a class $\D$,
$\tilde\Omega(\frac{\vcd}{\epsilon^2})$ examples are necessary for
discrimination. We next introduce a
natural extension of VC dimension to hypergraphs, which will play a
similar role.
\subsection{VC Dimension of hypergraphs}

We next define the notion of graph VC dimension for hypergraphs, as
we will later see this notion indeed characterizes the sample
complexity of discriminating classes, and in that sense it is a
natural extension of the notion of VC dimension for hypotheses
classes:

\begin{definition}
Given a family of $k$-hypergraphs, $\G$: The graph VC dimension of
the class $\G$, denoted $\gvcdim(\G)$, is defined inductively as
follows: For $k=1$ $\gvcdim(\G)$ is the standard notion of VC
dimension, i.e., $\gvcdim(\G)=\vcdim(\G)$. For $k>1$:
\[\gvcdim(\G) = \max_{v\in \V}\{\gvcdim(\G_{v})\}\]
\end{definition}

Roughly, the graph VC dimension of a hypergraph is given by the VC
dimension of the induced classes of distinguishers via projections.
Namely, we can think of the VC dimension of hypergraphs as the
projected VC dimension when we fix all coordinates in an edge except
for one.

\section{Main Results}
We next describe the main results of this work. The results are
divided into two sections: For the first part we characterize the
sample complexity of graph--based distinguishing class. The second
part is concerned with the expressive/distinguishing power of
graph--based discriminators. All proofs are provided in
\fullversion{\cref{prfs:sample,prfs:expressivity} respectively}.

\subsection{The sample complexity of graph-based distinguishing class}

We begin by providing upper bounds to the sample complexity for discrimination

\begin{restatable}[Sample Complexity -- Upper Bound]{theorem}{upper}\label{thm:sampleupper}
Let $\G$ be a $k$--distinguishing class with $\gvcdim(\G)=\vcd$ then
$\G$ has sample complexity $O(\frac{\vcd k^2}{\epsilon^2}\log
1/\delta)$.
\end{restatable}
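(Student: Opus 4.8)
The plan is to reduce the problem of discriminating a $k$-distinguishing class to a sequence of $k$ standard (one-dimensional) uniform convergence problems, one for each ``slot'' of the hyperedge, and then to invoke the classical VC bound slotwise. The key structural observation is that $\L_P(g)$ is a degree-$k$ U-statistic-like quantity (an expectation over a product distribution $P^k$), and that swapping a single coordinate of $P$ for an empirical sample changes $\L$ by a controlled amount precisely because the projected classes $\G_{\vv_{1:n}}$ defined in \cref{eq:Gv} have VC dimension at most $\gvcdim(\G)=\vcd$ by the definition of graph VC dimension.

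\medskip

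Concretely, I would proceed by a hybrid/telescoping argument over the $k$ coordinates. First, I would show a \emph{uniform convergence} statement: with high probability over a sample $S$ of size $m$, we have $\sup_{g\in\G}|\L_S(g)-\L_P(g)|$ small. To prove this I would interpolate between the full product distribution $P^k$ and the fully empirical object $\L_S$ by replacing one coordinate at a time. Define hybrid quantities $\L^{(j)}(g)$ in which the first $j$ coordinates are drawn from the empirical distribution on $S$ and the remaining $k-j$ from $P$; then $\L^{(0)}=\L_P$ and $\L^{(k)}=\L_S$, and I would bound each increment $|\L^{(j)}-\L^{(j-1)}|$ by a single-coordinate deviation. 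The crucial point is that for fixed values of the other $k-1$ coordinates, the quantity being averaged in the $j$-th coordinate is exactly a member of a projected class $\G_{\vv_{1:k-1}}$, which is a $1$-distinguishing class of VC dimension $\le\vcd$; hence the classical VC uniform convergence bound applies to control that single-coordinate swap uniformly over $g\in\G$. Summing the $k$ increments and using that each contributes $O(\sqrt{\vcd/m})$ (via the standard $\Theta(\vcd/\epsilon^2)$ rate recalled in the excerpt) gives a total deviation of $O(k\sqrt{\vcd/m})$, and setting this to $\epsilon/2$ yields $m=O(\vcd k^2/\epsilon^2)$; the $\log(1/\delta)$ factor comes from a union bound / confidence boosting across the $k$ hybrid steps.

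\medskip

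Once uniform convergence holds for both samples $S_1\sim p_1$ and $S_2\sim p_2$, the discriminating algorithm is immediate: the algorithm outputs $g_S^A=\arg\max_{g\in\G}|\L_{S_1}(g)-\L_{S_2}(g)|$, the empirical maximizer of the IPM. A standard two-sided argument then shows that the true discrepancy of this empirical maximizer is within $\epsilon$ of $\ipm_\G(p_1,p_2)$: for the empirically optimal $g$, its true value $|\L_{p_1}(g)-\L_{p_2}(g)|$ is at least its empirical value minus the uniform deviation, and the empirical value is at least that of the true optimizer minus the deviation, so we lose at most $2$ deviation terms in total. Choosing constants appropriately gives the claimed sample complexity.

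\medskip

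I expect the main obstacle to be the slotwise uniform convergence step, and specifically justifying that a single coordinate can be swapped \emph{uniformly over the whole class $\G$} rather than pointwise for a fixed $g$. The subtlety is that when I freeze the other $k-1$ coordinates at sampled values, the relevant one-dimensional function class is $\G_{\vv_{1:k-1}}$, and its VC dimension is bounded by $\vcd$ \emph{by definition} of graph VC dimension --- but I must be careful that the freezing values are themselves random (drawn from $P$ or from $S$), so that the deviation bound must hold simultaneously for all such conditionings. Handling this cleanly requires conditioning on the other coordinates, applying the VC bound to the conditional single-coordinate average, and then integrating/taking expectations over the conditioning, while tracking that the shared sample $S$ appears in multiple coordinates (so the hybrids are not fully independent). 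Managing this dependence between coordinates --- as opposed to treating it as a clean product of independent samples --- is where the argument needs the most care, and it is likely the source of the $k^2$ (rather than linear $k$) dependence in the bound.
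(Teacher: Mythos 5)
Your core decomposition is the same as the paper's: the paper proves \cref{thm:sampleupper} via a uniform convergence statement (\cref{thm:uc}), whose in-expectation version (\cref{lem:expecteduc}) is established by exactly the coordinate-peeling you describe --- written as an induction on $k$ rather than an explicit chain of hybrids $\L^{(0)},\ldots,\L^{(k)}$, but the two are the same telescoping, with each increment controlled by the classical VC bound applied to the projected classes $\G_{\vv_{1:k-1}}$ and $\G_v$ of \cref{eq:Gv}. Your closing two-sided argument for turning uniform convergence into a discriminator also matches the paper.

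The one place you diverge, and where your plan is genuinely underspecified, is the conversion to a high-probability statement. You attribute the $\log(1/\delta)$ to ``a union bound / confidence boosting across the $k$ hybrid steps,'' but each hybrid increment is a supremum over $\G$ of a quantity whose frozen coordinates are drawn from the \emph{same} sample $S$ being averaged over, so you cannot condition on the frozen tuple, apply the high-probability VC bound, and union-bound: the conditioning event is correlated with the coordinate you are averaging, and there are $m^{k-1}$ such tuples. This is precisely the obstacle you flag at the end without resolving. The paper resolves it by working entirely in expectation --- pulling the supremum inside the average over frozen tuples (bounding a sup of averages by an average of sups), and replacing the $k-1$ in-sample frozen elements by fresh i.i.d.\ draws at an additive cost of $O(k/m)$ per step, which is where the lower-order $k(k-1)/m$ term in \cref{lem:expecteduc} comes from --- and then obtains the $\log(1/\delta)$ dependence in one shot by applying McDiarmid's inequality to $F(S)=\sup_{g\in\G}|\L_S(g)-\L_P(g)|$, after showing the bounded-differences property $|F(S)-F(S')|\le 2k/m$ when one vertex of $S$ is changed. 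If you adopt that two-stage structure (expectation bound, then McDiarmid), your argument goes through; as written, the high-probability step is a gap.
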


\Cref{thm:sampleupper} is a corollary of the following uniform convergence upper bound for graph-based distinguishing classes.
\begin{restatable}[uniform convergence]{theorem}{uc}\label{thm:uc}
Let $\G$ be a $k$--distinguishing class with $\gvcdim(\G)=\vcd$. Let
$S=\{v_i\}_{i=1}^m$ be an IID sample of vertices drawn from some
unknown distribution $\P$. If $m=\Omega(\frac{\vcd
k^2}{\epsilon^2}\log 1/\delta)$ then with probability at least
$(1-\delta)$ (over the randomness of $S$):
\[\sup_{g\in \G}\left|\L_{S}(g)-\L_{\P}(g)\right|\le \epsilon.\]
\end{restatable}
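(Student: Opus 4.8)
The plan is to reduce the $k$-dimensional uniform deviation to a telescoping sum of $k$ ordinary one-dimensional deviations, each living inside a slice class $\G_{\vv_{1:k-1}}$ of VC dimension at most $\vcd$, and then to apply the classical VC uniform convergence bound slice by slice. For a fixed sample $S=\{v_i\}_{i=1}^m$ and $0\le j\le k$, define the hybrid quantity
\[ H_j(g)=\EE_{\vv_{1:j}\sim \P^j}\left[\frac{1}{m^{k-j}}\sum_{i_{j+1},\dots,i_k\in[m]} g(\vv_1,\dots,\vv_j,v_{i_{j+1}},\dots,v_{i_k})\right], \]
in which the first $j$ coordinates are drawn from $\P$ and the remaining $k-j$ are averaged over the sample. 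Since $g$ is undirected the labelling of coordinates is immaterial, so $H_0(g)=\L_S(g)$ and $H_k(g)=\L_\P(g)$. Writing $\L_S(g)-\L_\P(g)=\sum_{j=0}^{k-1}\bigl(H_j(g)-H_{j+1}(g)\bigr)$ and pulling $\sup_{g}$ through the convex average defining each term gives
\[ \sup_{g\in\G}\bigl|\L_S(g)-\L_\P(g)\bigr|\le \sum_{j=0}^{k-1}\EE_{\vv_{1:j}}\frac{1}{m^{k-j-1}}\sum_{i_{j+2},\dots,i_k}\sup_{g\in\G}\left|\frac1m\sum_{i=1}^m g(w,v_i)-\EE_{\vv}g(w,\vv)\right|, \]
where $w=(\vv_{1:j},v_{i_{j+2}},\dots,v_{i_k})$ collects the $k-1$ fixed coordinates and the inner expression is exactly $\L_S(g_w)-\L_\P(g_w)$ for the slice distinguisher $g_w$.

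Next I would bound the $S$-expectation of a single inner term. For a fixed slice $w$ the class $\{g_w:g\in\G\}=\G_w$ satisfies $\vcdim(\G_w)\le\gvcdim(\G)=\vcd$ by definition, so the classical VC bound gives expected uniform deviation $O(\sqrt{\vcd/m})$ \emph{provided} the averaged sample is i.i.d.\ from $\P$ and independent of $w$. This independence is precisely where the main obstacle lies: when $j<k-1$ the tuple $w$ contains sample points $v_{i_{j+2}},\dots,v_{i_k}$, so the points averaged in $\frac1m\sum_i g(w,v_i)$ are correlated with $w$. I would resolve this by conditioning on the values $\{v_i:i\in I\}$ with $I=\{i_{j+2},\dots,i_k\}$: since the $v_i$ are i.i.d., the remaining coordinates $\{v_i:i\notin I\}$ are then i.i.d.\ from $\P$ and independent of $w$. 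Replacing $\frac1m\sum_{i=1}^m$ by $\frac{1}{m-|I|}\sum_{i\notin I}$ changes the average by at most $2k/m$ (at most $k$ diagonal terms plus the reweighting factor), after which the conditional VC bound applies and yields $\EE_S\sup_{g}|\cdots|\le O(\sqrt{\vcd/m})+O(k/m)$ for each inner term.

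Summing the $k$ telescoping terms then gives $\EE_S\sup_{g\in\G}|\L_S(g)-\L_\P(g)|\le O\!\bigl(k\sqrt{\vcd/m}+k^2/m\bigr)$, which is at most $\epsilon/2$ once $m=\Omega(k^2\vcd/\epsilon^2)$. Finally I would upgrade this in-expectation bound to a high-probability statement via bounded differences: changing a single $v_i$ alters $\L_S(g)$ by at most $k/m$ (only the $\le km^{k-1}$ tuples through index $i$ are affected), uniformly in $g$, so $\sup_{g}|\L_S(g)-\L_\P(g)|$ satisfies McDiarmid's inequality with increments $k/m$ and concentrates around its mean within $\epsilon/2$ once $m=\Omega(k^2\log(1/\delta)/\epsilon^2)$. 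Combining the two requirements yields the claimed $m=\Omega\!\bigl(\tfrac{\vcd k^2}{\epsilon^2}\log(1/\delta)\bigr)$. The hard part is entirely the coupling between the averaged sample points and the fixed coordinates in the hybrid terms; once that is broken by conditioning and discarding the $O(k/m)$ diagonal mass, the remainder is standard VC theory together with concentration.
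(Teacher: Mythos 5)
Your proposal is correct and is essentially the paper's own argument: the paper organizes the same hybrid decomposition as an induction on $k$ (peeling off one coordinate per step, which unrolls to exactly your telescoping sum $\sum_j (H_j-H_{j+1})$), bounds each one-dimensional slice deviation by the classical VC bound after decoupling the sample points occurring in the fixed coordinates at an $O(k/m)$ cost (via auxiliary fresh draws $\uu_{1:k-1}$, matching your conditioning-and-discard-the-diagonal step), and finishes with McDiarmid using the same $O(k/m)$ bounded-differences estimate. You correctly identified the one genuinely delicate point (the correlation between the slice index $w$ and the averaged sample) and resolved it the same way.
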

\procversion{The proof of \cref{thm:uc} is given in \fullversion{\cref{prf:uc}}.}
We next provide a lower bound for the sample complexity of discriminating algorithms in terms of the graph VC dimension of the class
\begin{restatable}[Sample Complexity -- Lower Bound]{theorem}{low}\label{thm:samplelower}
Let $\G$ be a $k$--distinguishing class with $\gvcdim(\G)=\vcd$. For sufficiently large $\rho$ ($\rho = \Omega(2^{O(k^3)})$), any
$\G$-discriminating algorithm with accuracy $\epsilon>0$ that
succeeds with probability $1-\frac{2^{-k\log k}}{3}$, must observe
at least $\Omega\left(\frac{\sqrt{\vcd}}{2^{7k^3}\epsilon^2}\right)$
samples.
\end{restatable}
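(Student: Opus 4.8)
The plan is to prove the lower bound by reduction from \emph{uniformity testing}, invoking the $\Omega(\sqrt{n}/\epsilon^2)$ barrier of \cite{paninski2008coincidence} over a domain of size $n$. The target claim is that an economical $\G$-discriminating algorithm would yield an economical uniformity tester over a domain of size $\vcd$, which forces the stated sample bound.

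First I would extract the combinatorial witness of the graph VC dimension. By the inductive definition of $\gvcdim$, there is a sequence of anchor vertices $v_1,\ldots,v_{k-1}$ such that the induced $1$-distinguishing class $\G_{v_{1:k-1}}$ (\cref{eq:Gv}) has ordinary VC dimension $\vcd$; hence there is a set $T=\{t_1,\ldots,t_\vcd\}\subseteq\V$ shattered by $\G_{v_{1:k-1}}$. Shattering means that for every $A\subseteq T$ there is a hypergraph $g_A\in\G$ with $g_A(v_1,\ldots,v_{k-1},t)=\mathbf{1}[t\in A]$ for all $t\in T$. Crucially, shattering pins down only these ``activated'' single-coordinate projections and constrains nothing about the value of $g_A$ on other tuples, in particular on self-loops and multi-$T$ interactions.

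Second, I would build the hard instances on $T$. Fix a small activation weight $\delta_0$ and let $p_1,p_2$ place the bulk of their mass on the anchors $v_1,\ldots,v_{k-1}$ and the remaining mass $\delta_0$ on $T$, distributed as $q_1$ and $q_2$; take $q_1$ uniform on $T$ and $q_2$ a Paninski near-uniform perturbation that is $\gamma$-far from uniform. Expanding $\L_{p_i}(g_A)$ over tuple types, the terms in which exactly one coordinate lands in $T$ contribute, up to a combinatorial constant, $\Theta(\delta_0)\cdot q_i(A)$, while every term with two or more coordinates in $T$ is $O(\delta_0^2)$. Since shattering realizes every subset-test, $\sup_A\big(q_1(A)-q_2(A)\big)=\tv{q_1}{q_2}=\gamma$, so for $\delta_0\ll\gamma$ the quadratic terms are suppressed and I would conclude $\ipm_{\G}(p_1,p_2)=\Theta(\delta_0\gamma)$, attained essentially only by the subset-tests $g_A$.

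Finally I would close the reduction. When $q_2=q_1$ (uniform case) we have $p_1=p_2$ and every graph gives value $0$; when $q_2$ is $\gamma$-far the discriminator must output $g$ with $|\L_{p_1}(g)-\L_{p_2}(g)|>\ipm_{\G}(p_1,p_2)-\epsilon=\Theta(\delta_0\gamma)-\epsilon$. Choosing the discrimination accuracy $\epsilon=\Theta(\delta_0\gamma)$, estimating the first-order quantity $|\L_{p_1}(g)-\L_{p_2}(g)|$ for the now-\emph{fixed} output $g$ with a handful of fresh samples yields a uniform-vs-far decision, so the whole construction is a uniformity tester; by \cite{paninski2008coincidence} it must observe $\Omega(\sqrt{\vcd}/\gamma^2)=\Omega(\delta_0^2\sqrt{\vcd}/\epsilon^2)$ samples. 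The main obstacle is the ``uncontrolled'' part of the construction: because shattering constrains only single-activated projections, I must bound the $O(\delta_0^2)$ contributions of \emph{all} remaining tuple types — of which there are exponentially many in $k$ — uniformly, so that they neither dominate the first-order signal nor create a spurious, easily-certifiable IPM. Forcing $\delta_0=2^{-\Theta(k^3)}$ to dominate these $k$-ary interactions is exactly what inserts the factor $\delta_0^2=2^{-\Theta(k^3)}$ into the bound, explaining the $2^{7k^3}$ term and the requirement $\vcd=\Omega(2^{O(k^3)})$; the near-one success probability $1-\tfrac{2^{-k\log k}}{3}$ then arises from a union bound over the $\sim k^k$ combinatorial components of the construction, needed so that the induced uniformity tester still meets Paninski's $2/3$ confidence threshold.
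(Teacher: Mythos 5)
Your overall target---a reduction to Paninski's uniformity-testing lower bound over a shattered set of size $\vcd$---is the same as the paper's, and your argument is essentially correct (and essentially the paper's) for $k=1,2$. But for $k\ge 3$ there is a genuine gap at the central step, namely the claim that $\ipm_{\G}(p_1,p_2)=\Theta(\delta_0\gamma)$, ``attained essentially only by the subset-tests $g_A$.'' Write $p_i=(1-\delta_0)\mu+\delta_0 q_i$ with $\mu$ supported on the anchors. In $\L_{p_1}(g)-\L_{p_2}(g)$ the tuples with exactly one coordinate in $T$ contribute $k\delta_0(1-\delta_0)^{k-1}\langle q_1-q_2,h_g\rangle$, where $h_g(t)=\EE_{\vv_{1:k-1}\sim\mu^{k-1}}[g(\vv_{1:k-1},t)]$. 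Only the event that the $k-1$ anchor coordinates hit $v_1,\ldots,v_{k-1}$ each exactly once---probability $(k-1)!/(k-1)^{k-1}\approx e^{-k}$ under the best choice of $\mu$---produces the shattering-controlled value $\mathbf{1}[t\in A]$; the dominant remaining fraction of $h_{g_A}$ comes from \emph{degenerate} anchor patterns (repeated or missing anchors), on which the shattering of $\G_{v_{1:k-1}}$ constrains nothing. These contributions are $\Theta(\delta_0)$, not $O(\delta_0^2)$, so they are not suppressed by shrinking $\delta_0$, and an adversarial class can use them to cancel or badly attenuate the signal $(q_1-q_2)(A)$. You only flag ``self-loops and multi-$T$ interactions,'' i.e.\ the genuinely quadratic terms, so the lower bound $\ipm_{\G}(p_1,p_2)>\epsilon$ needed in the far case---the one direction of the reduction that actually requires controlling $\G$---is unproven.

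This is precisely the difficulty the paper's proof is organized around, and it takes a different route: it proceeds by induction on $k$, fixing \emph{one} anchor per level. At each level it forms the mixtures $p^q=q\delta_v+(1-q)p$ for the $k+1$ grid values $q\in\{0,\tfrac1k,\ldots,1\}$, runs the tester on each pair, and uses a Vandermonde smallest-singular-value bound (\cref{cl:vandermonde}, inside \cref{lem:lowerbound}) to extract the linear-in-$q$ coefficient of $\ipm_{\G}(p_1^q,p_2^q)$, which isolates exactly the ``one coordinate equals $v$'' term from the degenerate higher powers of $q$. This costs a factor $2^{3k^2}$ in accuracy and a factor $k$ in confidence per level; compounding over $k$ levels yields the $2^{7k^3}$ and the $1-\tfrac{2^{-k\log k}}{3}$ in the statement, and the condition $\vcd=\Omega(2^{O(k^3)})$ comes from making the main term dominate the extra $O(k^2\log(1/\delta)/\epsilon^2)$ samples used to convert a discriminator into a tester. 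Your accounting of these quantities via $\delta_0$ and a union bound over tuple types does not correspond to a step you have actually carried out. To salvage a one-shot construction you would need some analogue of this coefficient-extraction mechanism to separate the non-degenerate anchor pattern from the uncontrolled ones.
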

\procversion{We refer the reader to \fullversion{\cref{prf:samplelower}} for a proof of \cref{thm:samplelower}.} Our upper bounds and lower bounds leave a gap of order
$O(\sqrt{\vcd})$. As dicussed in \cref{sec:disciminating}, for the case $k=1$ we can provide a tight $\theta(\frac{\vcd}{\epsilon^2})$ bound through a reduction to agnostic PAC learning and the appropriate lower bounds\cite{anthony2009neural}. In general it would be interesting to improve the above bound both in terms of $\rho$ and $k$.

\subsection{The expressive power of graph-based distinguishing class}
So far we have characterized the discriminability of graph-based
distinguishing classes. It is natural though to ask if graph--based
distinguishing classes add any advantage over standard
$1$-distinguishing classes. In this section we provide several
results that show that indeed graph provide extra expressive power
over standard distinguishing classes.

We begin by providing a result over infinite graphs. \procversion{(proof is provided in \fullversion{\cref{prf:main_expressive1}})}
\begin{restatable}{theorem}{expressive}\label{thm:main_expressive1}
Let $\V=\mathbb{N}$. There exists a distinguishing graph class $\G$, with sample complexity $m(\epsilon,\delta)=O(\frac{\log 1/\delta}{\epsilon^2})$ (in fact $|\G|=1$) such that: for any $1$-distinguishing class $\D$ with finite VC dimension, and every $\epsilon>0$ there are two distributions $p_1,p_2$ such that $\ipm_{\D}(p_1,p_2)<\epsilon$ but $\ipm_{\G}(p_1,p_2)>1/2$
\end{restatable}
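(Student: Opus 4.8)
The plan is to realize the separation with uniform distributions supported on a clique and on an independent set of one fixed universal graph $g$. Concretely, if $B$ is an independent set and $A$ a clique of $g$ with $|A|=|B|=n$, and we set $p_1=\mathrm{unif}(B)$, $p_2=\mathrm{unif}(A)$, then $\L_{p_1}(g)=0$ (no edge lies inside $B$, and the diagonal contributes nothing) while $\L_{p_2}(g)=1-\tfrac1n$ (every off--diagonal pair inside $A$ is an edge), so $\ipm_{\{g\}}(p_1,p_2)\ge 1-\tfrac1n>\tfrac12$ for $n\ge 3$. The sample--complexity requirement is immediate: a class consisting of a single $2$--hypergraph has $\gvcdim(\{g\})=0$, so \cref{thm:uc} (equivalently, Hoeffding's inequality applied to the bounded degree--$2$ $U$--statistic $\L_S(g)$) gives uniform convergence, hence discrimination, with $m(\epsilon,\delta)=O(\tfrac{\log 1/\delta}{\epsilon^2})$, the discriminator simply returning $g$. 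It thus remains to choose $g$ and, for each finite--VC class $\D$ and each $\epsilon$, to produce such $A,B$ that $\D$ cannot tell apart.

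For $p_1,p_2$ as above, $\ipm_\D(p_1,p_2)=\max_{d\in\D}\big|\tfrac{|d\cap B|}{n}-\tfrac{|d\cap A|}{n}\big|$, so the theorem reduces to the following combinatorial statement: there is a single graph $g$ on $\mathbb N$ such that for every $\D$ with $\vcdim(\D)=\rho<\infty$ and every $\epsilon>0$ there exist an $n$--clique $A$ and an $n$--independent set $B$ of $g$ with $\max_{d\in\D}\big||d\cap A|-|d\cap B|\big|\le\epsilon n$. I would take $g$ to be the countable random graph (the Rado graph), i.e.\ a fixed outcome of $G(\mathbb N,\tfrac12)$. Its relevant feature is that cliques and independent sets are \emph{generic}: they occur densely and in every region, so that, unlike structured universal graphs, no bounded--capacity family of subsets can localize them. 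The two requirements on $A,B$ pull in opposite directions: $\D$--balance is an equidistribution (pseudorandomness) condition, whereas being a clique or an independent set is rigid structure, and the role of the Rado graph is precisely that these can be met simultaneously.

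To find a $\D$--balanced clique/independent--set pair I would first reduce balance to \emph{representativeness}: call an $n$--set $T$ inside a region $R=\{1,\dots,N\}$ \emph{$\D$--representative} if $\big||T\cap d|/n-|R\cap d|/N\big|\le\epsilon/2$ for all $d\in\D$; any two $\D$--representative sets are automatically $\epsilon$--balanced against each other, so it suffices to exhibit one representative clique and one representative independent set. The engine is a symmetry of $G(N,\tfrac12)$: a uniformly random $n$--set conditioned on being a clique is again uniform over all $n$--sets (the clique event is permutation--invariant and has the same probability $2^{-\binom n2}$ for every $n$--set). Hence, by the VC $\epsilon$--sample theorem, whose sample size $n=\Theta(\rho\,\epsilon^{-2}\log(1/\epsilon))$ is independent of $N$, a uniform random $n$--clique is $\D$--representative with probability $\ge 1-\eta$, and likewise for independent sets. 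Since $n$--cliques are abundant once $N\gtrsim 2^{\,n}$ (the expected number of representative cliques is $\ge(1-\eta)\binom Nn 2^{-\binom n2}\gg1$), a representative clique, and similarly a representative independent set, exists, which is the lemma.

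The main obstacle is producing a \emph{single} $g$ that serves \emph{all} finite--VC classes at once, since the argument above fixes $\D$ first. To fix $g$ first I would run the probabilistic construction over disjoint regions $R_1,R_2,\dots$ (whose edge--sets are independent) and argue that almost every outcome of $G(\mathbb N,\tfrac12)$ contains, for every $\rho$ and every rational $\epsilon$, arbitrarily large regions with both a representative clique and a representative independent set for each bounded--VC family; the delicate point is to make this uniform over the uncountable collection of classes $\D$. I would handle this region by region: on a finite $R$ there are only finitely many distinct restrictions $\D|_R$ of VC dimension $\le\rho$, Sauer's lemma bounds each by $N^\rho$ patterns, and the $N$--free sample bound keeps the required clique size small enough that such cliques exist in $R$, after which a Borel--Cantelli argument across the independent regions yields one good $g$. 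I expect this reconciliation, pseudorandom balance versus rigid clique/independent structure made uniform over all finite--VC classes by one fixed graph, to be where essentially all the work lies; the failure of the step for ``structured'' universal graphs (e.g.\ a disjoint union of cliques, where the single block containing $A$ is a VC--dimension--one distinguisher separating $A$ from $B$) is exactly what shows why the genericity of $g$ is indispensable.
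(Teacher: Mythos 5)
Your reductions are sound up to the last step: a clique/independent-set pair of size $n\ge 3$ does give $\ipm_{\{g\}}\ge 1-\tfrac1n>\tfrac12$, a singleton class does have sample complexity $O(\tfrac{\log 1/\delta}{\epsilon^2})$, and the observation that a uniformly random $n$-set of $G(N,\tfrac12)$ conditioned on being a clique is still uniform (so that, for a \emph{fixed} $\D$, representative cliques and independent sets exist with positive probability over the graph) is correct and pretty. But the theorem's content lies in the quantifier order $\exists g\ \forall \D$, and that is exactly where your argument has a genuine gap. Your proposed fix--enumerate the finitely many restrictions $\D|_R$ of VC dimension $\le\rho$ on a region $R$ of size $N$, union bound, then Borel--Cantelli over regions--cannot work. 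Already for $\rho=2$ there are at least $2^{\binom N2}$ such restrictions (every subfamily of the family of all $2$-element subsets of $R$ has VC dimension $\le 2$), whereas the probability space of $G(N,\tfrac12)$ restricted to $R$ is generated by only $\binom N2$ fair bits, so every nonempty failure event (and ``no representative $n$-clique'' is nonempty: the empty graph is an outcome) has probability at least $2^{-\binom N2}$. The union bound therefore gives a trivial bound $\ge 1$, and no sharpening of the per-family estimate can rescue it; one would need a fundamentally different argument to beat the enumeration, and you do not supply one. (For a fixed $\D$ the argument also needs a second-moment step: expected number of representative cliques $\gg 1$ only gives existence with positive probability, and you need a representative clique \emph{and} a representative independent set in the same outcome; this is fixable, unlike the quantifier issue.)

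The paper avoids this entirely by making the universality of $g$ deterministic rather than generic: $g$ is the bipartite membership graph between $\V_1\cong\mathbb N$ and $\V_2=\{v_A: A\subseteq\mathbb N \text{ finite}\}$, with an edge $(v_i,v_A)$ iff $i\in A$. Given $\D$, one finds (by sampling two subsamples of a large finite set and invoking uniform convergence, i.e.\ \cref{thm:uc}) two distributions $q_1,q_2$ on $\V_1$ with disjoint supports and $\ipm_{\D}(q_1,q_2)<\epsilon$, and then mixes each with the point mass at the single vertex $v_A$ for $A=\mathrm{supp}(q_1)$; the edge probability is $\ge 1/2$ under $p_1$ and $0$ under $p_2$. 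The only randomness is over the choice of the subsamples, which happens \emph{after} $\D$ is fixed, so no uniformity over classes is ever needed: the graph carries a dedicated distinguishing vertex for every finite subset by construction. If you want to pursue your route, you would need a single graph in which balanced cliques can be found against every bounded-VC family simultaneously--essentially a strong hereditary Ramsey-with-equidistribution property--and establishing that for one fixed graph is a substantial problem in its own right, not a routine Borel--Cantelli step.
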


\Cref{thm:main_expressive1} can be generalized to higher order distinguishing classes \procversion{(see \fullversion{\cref{prf:main_expressivek}} for a proof)}:
\begin{restatable}{theorem}{expressivek}\label{thm:main_expressivek} Let $\V=\mathbb{N}$. There exists a $k$-distinguishing class $\G_k$, with sample complexity $m(\epsilon,\delta)=O(\frac{k^2+\log 1/\delta}{\epsilon^2})$ such that: For any $k-1$-distinguishing class $\G_{k-1}$ with bounded sample complexity, and every $\epsilon>0$ there are two distributions $p_1,p_2$ such that $\ipm_{\G_{k-1}}(p_1,p_2)<\epsilon$ and $\ipm_{\G_{k}}(p_1,p_2)>1/4$.
\end{restatable}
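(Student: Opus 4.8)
The plan is to prove \cref{thm:main_expressivek} by exhibiting a single $k$-uniform hypergraph $g_k$ and, for each target class $\G_{k-1}$ and accuracy $\epsilon$, a pair of \emph{uniform} distributions on a large common region of $\V=\mathbb{N}$ that $g_k$ separates but that no bounded-sample-complexity $(k-1)$-ary test can tell apart. I would set $\G_k=\{g_k\}$; since a singleton class has $\gvcdim(\G_k)=0$, \cref{thm:uc} immediately gives the claimed sample complexity $O((k^2+\log 1/\delta)/\epsilon^2)$, so the entire content lies in the separation. The key design constraint, learned from the $k=2$ case of \cref{thm:main_expressive1}, is that both distributions must be maximally \emph{spread out}: a naive ``all-equal'' $k$-collision test forces a point of mass $\Omega(1)$, which even a $1$-ary class (e.g.\ thresholds) detects through a jump in the CDF. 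Thus $g_k$ cannot be collision-like; its high/low edge density must be realizable by fully spread distributions.

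\textbf{Step 1 (the hypergraph and the easy direction).} I would design $g_k$ so that it simultaneously contains exponentially many large ``$g_k$-cliques'' $S$ (with $\L_{\mathrm{unif}(S)}(g_k)\ge 3/4$) and large ``$g_k$-independent sets'' $S'$ (with $\L_{\mathrm{unif}(S')}(g_k)\le 1/4$), all sitting on a \emph{common} ground set and interchanged by the gadget's symmetries — for instance a vertex-transitive, self-complementary (Paley-type) $2$-ary gadget lifted to a $k$-uniform relation, or a direct $k$-ary lift of the graph produced by \cref{thm:main_expressive1}. For any such pair, $\ipm_{\G_k}(\mathrm{unif}(S),\mathrm{unif}(S'))\ge 3/4-1/4>1/4$, which is the required lower bound; and since $|S|=|S'|=N\to\infty$, all collision profiles of the two distributions coincide and vanish, so no low-order collision test can exploit them.

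\textbf{Step 2 (fooling $\G_{k-1}$ — the main obstacle).} Here I must choose $(S,S')$ so that $\ipm_{\G_{k-1}}(\mathrm{unif}(S),\mathrm{unif}(S'))<\epsilon$ \emph{simultaneously for every} $g'\in\G_{k-1}$. I would draw $(S,S')$ at random among the many candidates inside a size-$N$ window, with $N$ a large function of $k,\epsilon$ and the sample complexity $m_0$ of $\G_{k-1}$, and then argue by concentration and a union bound. Two ingredients are needed. First, by self-complementarity/transitivity of the gadget, a random $g_k$-clique $S$ and a random $g_k$-independent set $S'$ are \emph{identically distributed as abstract point sets} within the window; hence for any fixed $g'$ that is oblivious to $g_k$, the two terms $\L_{\mathrm{unif}(S)}(g')$ and $\L_{\mathrm{unif}(S')}(g')$ are identically distributed, so the expected difference vanishes, with $U$-statistic concentration of order $k/\sqrt{N}$. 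Second — and this is the crux — I need a capacity bound on $\G_{k-1}$ to union-bound over its ``behaviours''. As stressed in the introduction, bounded sample complexity does \emph{not} bound the VC dimension of $\G_{k-1}$ as subsets of $\V^{k-1}$ (witness $k$-regular graphs), so Sauer's lemma is unavailable. The resolution must go through the fact that discrimination only demands uniform convergence against \emph{product} distributions $\mathrm{unif}(S)^{k-1}$: invoking the uniform-convergence machinery behind \cref{thm:uc,thm:sampleupper}, one shows that a class of sample complexity $m_0$ realizes only $\mathrm{poly}_{m_0,k}(N)$ genuinely distinct product-expectation behaviours on an $N$-point region, after which the union bound produces, for $N$ large, a single good pair.

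\textbf{Step 3 (main difficulty and fallback).} I expect the decisive difficulty to be exactly this product-distribution capacity bound for $(k-1)$-ary classes, together with ensuring the $k$-ary gap survives: $g_k$'s clique/independent-set split must be orthogonal to \emph{every} low-complexity $(k-1)$-ary relation, including the one-coordinate projections $(g')_{v}$ out of which $\gvcdim$ is built, since these are precisely the $(k-1)$-ary tests most likely to mimic $g_k$. If the clean counting bound against product distributions proves elusive, a fallback is an inductive route that takes \cref{thm:main_expressive1} as the base case and lifts a $(k-1)$-versus-$(k-2)$ separation by adjoining one coordinate; but because the statement quantifies over \emph{all} $(k-1)$-ary classes at once, I regard the direct randomized construction above as the more promising path.
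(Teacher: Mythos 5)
There is a genuine gap, and it sits exactly where you flagged it: Step 2. Your plan needs a uniform statement over \emph{all} $g'\in\G_{k-1}$ for a pair $(S,S')$ that is \emph{not} a pair of i.i.d.\ samples from a common distribution (a random $g_k$-clique and a random $g_k$-independent set are structured objects), so you cannot invoke \cref{thm:uc} directly and instead posit a counting bound of ``$\mathrm{poly}_{m_0,k}(N)$ distinct product-expectation behaviours.'' No such bound is established anywhere, and it does not follow from bounded sample complexity: as the paper itself notes, the restriction of $\G_{k-1}$ to $S^{k-1}$ can have unbounded VC dimension, so Sauer-type counting fails, and bounded sample complexity only buys uniform convergence for i.i.d.\ product samples. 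The paper's proof dissolves this difficulty rather than solving it: it takes $q_1=p_{S_1}$ and $q_2=p_{S_2}$ to be empirical distributions of two i.i.d.\ samples from a single large uniform distribution $p_S$; since bounded sample complexity forces finite graph VC (via \cref{thm:samplelower}), \cref{thm:uc} gives $\ipm_{\G_{k-1}}(p_{S_i},p_S)<\epsilon/2$ simultaneously over the whole class with no union bound, and the triangle inequality plus a disjointness argument yields \cref{cl:disjointk}. There is also a second unestablished ingredient in your plan: the gadget of Step 1. A single $k$-uniform relation admitting both large density-$\ge 3/4$ sets and large density-$\le 1/4$ sets that are ``identically distributed as abstract point sets'' is in tension with itself (interchangeability pushes toward quasirandomness, which forces all large sets to have comparable density), and its existence is asserted, not proved.

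Separately, your guiding design constraint --- that both distributions must be atomless and spread out because any $\Omega(1)$ point mass is detectable --- is a red herring that steers you away from the actual construction. The paper places an atom of mass $\tfrac{1}{k}$ at a special ``set vertex'' $v_A$ with $A=\mathrm{supp}(q_1)$ in \emph{both} $p_1$ and $p_2$, so no $(k-1)$-ary test can exploit it; yet the hyperedge relation ``one coordinate equals $v_A$ and the remaining $k-1$ coordinates have indices in $A$'' fires with probability $\ge e^{-1}$ under $p_1^k$ and probability $0$ under $p_2^k$, because $q_2$'s support is disjoint from $A$. The separation thus comes from a shared atom encoding the support of $q_1$, not from an edge-density gap between spread-out distributions. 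Your fallback (inducting on $k$ from \cref{thm:main_expressive1}) is closer in spirit to what the paper does, but even there the engine is the i.i.d.-sampling trick above, not a behaviour-counting union bound.
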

\paragraph{Finite Graphs}
We next study the expressive power of distinguishing graphs over finite domains.

It is known that, over a finite domain $\V=\{1,\ldots, n\}$, we can learn with a sample complexity of $O(\frac{n}{\epsilon^2}\log 1/\delta)$ any distinguishing class. In fact, we can learn the total variation metric (indeed the sample complexity of $\mathcal{P}(\V)$ is bounded by $\log |\mathcal{P}(V)| =n$).

Therefore if we allow classes whose sample complexity scales linearly with $n$ we cannot hope to show any advantage for distinguishing graphs. However, in most natural problems $n$ is considered to be very large (for example, over the Boolean cube $n$ is exponential in the dimension). We thus, in general, would like to study classes that have better complexity in terms of $n$. In that sense, we can show that indeed distinguishing graphs yield extra expressive power.

In particular, we show that for classes with sublogarithmic sample complexity, we can construct graphs that are incomparable with a higher order distinguishing class.

\begin{restatable}{theorem}{expressivefinitek}\label{thm:main_expressive_finite_k}
Let $|\V|=n$. There exists a $k$-distinguishing class $\G_{k}$, with sample complexity $m(\epsilon,\delta)=O(\frac{k^2+\log 1/\delta}{\epsilon^2})$ (in fact $|\G|=1$) such that: For any $\epsilon>0$ and any $k-1$ distinguishing class $\G_{k-1}$ if:
\[ \ipm_{\G_{k-1}} \succ \epsilon \cdot \ipm{\G_{k}}\] then
$\gvcdim(\G_{k-1}) = \Omega( \frac{\epsilon^2}{k^2} \sqrt{\log n})$.
\end{restatable}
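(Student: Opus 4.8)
The plan is to realize the infinite separation of \cref{thm:main_expressivek} over the finite domain $[n]$, track how large a ``hard structure'' can be embedded into $n$ vertices, and then convert the resulting sample-complexity lower bound into a graph-VC lower bound by feeding it through the upper bound of \cref{thm:sampleupper}. The guiding observation is that \cref{thm:sampleupper} reads, after solving for the dimension, as $\gvcdim(\G_{k-1})\ge\Omega\!\big(m\,\epsilon^2/(k^2\log\tfrac1\delta)\big)$, where $m$ is \emph{any} valid sample complexity for $\G_{k-1}$. So it suffices to force $\G_{k-1}$ to require $m=\Omega(\sqrt{\log n})$ samples whenever it keeps up with $g$.

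First I would construct the single $k$-ary graph $\G_k=\{g\}$ as a finitization of the class used in \cref{thm:main_expressivek}. That class lives on $\mathbb{N}$ and exploits a universal neighborhood structure of unbounded size through its $(k-1)$-fold projections; over $[n]$ the largest such structure that can be faithfully embedded has size $N=\Theta(\log n)$, since realizing the required richness of the projected families forces the domain to contain on the order of $2^{\Omega(N)}$ distinct vertices. On top of this structure I would transport the hard family of distribution pairs from the proof of \cref{thm:main_expressivek}, so that $g$ certifies $\ipm_{\G_k}(p_1,p_2)\ge 1/4$ on every pair. Since $|\G_k|=1$ and $\gvcdim(\G_k)=0$, estimating $\L_P(g)$ is a single V-statistic, so \cref{thm:uc} hands us the stated $O((k^2+\log 1/\delta)/\epsilon^2)$ sample complexity for free.

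Next, assume $\ipm_{\G_{k-1}}\succ\epsilon\cdot\ipm_{\G_k}$. Then on every pair of the family $\ipm_{\G_{k-1}}(p_1,p_2)\ge\epsilon/4$, so any $\G_{k-1}$-discriminating algorithm run at accuracy $\epsilon/8$ and constant confidence must separate the whole family. I would then show that separating this family with a class of arity $k-1$ is expensive: reproducing the collision/second-moment mechanism underlying \cref{thm:main_expressivek}, any $(k-1)$-ary discriminator of the family must observe $m=\Omega(\sqrt{N})=\Omega(\sqrt{\log n})$ samples, the $\sqrt{\cdot}$ being exactly the birthday-type efficiency of the $k$-ary statistic relative to lower arity. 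Crucially this bound is \emph{not} information-theoretic---$g$ itself succeeds with $O(k^2/\epsilon^2)$ samples---but is specific to the restriction to arity $k-1$. Combining with \cref{thm:sampleupper}, which caps the sample complexity of the \emph{given} $\G_{k-1}$ at $O(\gvcdim(\G_{k-1})\,k^2/\epsilon^2\cdot\log1/\delta)$, and solving for the dimension yields $\gvcdim(\G_{k-1})=\Omega(\tfrac{\epsilon^2}{k^2}\sqrt{\log n})$, with the $k^2$ entering solely through the conversion.

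The main obstacle is the finite, arity-specific lower bound in the third step: one must prove that \emph{no} $(k-1)$-ary class can cheaply separate a family that the fixed $k$-ary $g$ separates with a constant gap, and pin down that the structure on which this holds has size exactly $\Theta(\log n)$. Concretely this requires (i) establishing that $n\ge 2^{\Omega(N)}$ is both necessary and sufficient to embed the projected structure, so the barrier is logarithmic rather than polynomial in $n$, and (ii) carrying the $\sqrt{\cdot}$ through the arity reduction with a clean (non-exponential) $k$-dependence, in contrast to the lossy $2^{O(k^3)}$ factors of \cref{thm:samplelower}, so that the final bound matches $\tfrac{\epsilon^2}{k^2}\sqrt{\log n}$. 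A secondary, more routine point is checking positivity and the undirectedness constraint on $g$ and on the transported distributions when moving from $\mathbb{N}$ to $[n]$.
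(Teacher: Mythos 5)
Your construction of $g$ (a bipartite $k$-ary graph with $|\V_1|=\log n$ and $\V_2$ indexing subsets of $\V_1$, the size $\log n$ forced by the need for $2^{|\V_1|}$ vertices) matches the paper, as does the final step of solving the upper bound of \cref{thm:sampleupper} for the dimension. But the heart of your argument --- step three --- has a genuine gap. You reduce the theorem to the claim that \emph{any} $(k-1)$-ary class that $\epsilon$-dominates $g$ must have sample complexity $\Omega(\sqrt{\log n})$, to be proven by "reproducing the collision/second-moment mechanism" with "a clean (non-exponential) $k$-dependence, in contrast to the lossy $2^{O(k^3)}$ factors of \cref{thm:samplelower}." You correctly identify this as the main obstacle, but you do not resolve it, and it is not merely a technical detail: an arity-specific, information-theoretic-style lower bound against all $(k-1)$-ary discriminators is precisely the kind of statement for which the paper's only available tool (the induction behind \cref{thm:samplelower}) loses exponential factors in $k$. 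No alternative mechanism is supplied.

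The paper avoids this entirely: no sample-complexity \emph{lower} bound for $\G_{k-1}$ is ever proved. Instead, given $\G_{k-1}$ with $\gvcdim(\G_{k-1})=\vcd$, set $m=m(\epsilon/8,0.99)=O(\vcd k^2/\epsilon^2)$, the \emph{upper} bound from \cref{thm:sampleupper}. Draw two uniform random samples $S_1,S_2$ of size $m$ from $\V_1$; by the uniform convergence of \cref{thm:uc} both empirical distributions are $\epsilon/8$-close in $\ipm_{\G_{k-1}}$ to the uniform distribution on $\V_1$, hence $\epsilon/4$-close to each other, and by the birthday paradox they are disjointly supported with constant probability \emph{provided} $\log n\ge m^2$ (this quadratic requirement, not a testing lower bound, is the sole source of the $\sqrt{\cdot}$). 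Mixing in the atom $\delta_{v_A}$, with $A=\mathrm{supp}(q_1)$, at weight $1/k$ yields $p_1,p_2$ with $\ipm_{\G_{k-1}}(p_1,p_2)<\epsilon/4$ while the $g$-edge probability is at least $e^{-1}$ under $p_1^k$ and $0$ under $p_2^k$, contradicting $\ipm_{\G_{k-1}}\succ\epsilon\cdot\ipm_{\G_k}$. Hence domination forces $\log n\le m^2$, i.e., $\vcd=\Omega(\frac{\epsilon^2}{k^2}\sqrt{\log n})$. Note the subtlety your framing misses: the hard pair $(p_1,p_2)$ is built \emph{using} the given class's VC-based sample size as a parameter, so one only ever needs the upper-bound direction of uniform convergence; the "lower bound" you seek is obtained for free by contraposition. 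If you want to salvage your outline, replace your step three with this direct construction (the finite analogue of \cref{cl:disjointk}).
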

\procversion{The proof is given in \fullversion{\cref{prf:main_expressive_finite_k}}.}
We can improve the bound in \cref{thm:main_expressive_finite_k} for the case $k=1$ \procversion{(see \fullversion{\cref{prf:main_expressive_finite_1}} for proof)}.

\begin{restatable}{theorem}{expressivefinite}\label{thm:main_expressive_finite_1}
Let $|\V|=n$. There exists a $2$-distinguishing class $\G$, with sample complexity $m(\epsilon,\delta)=O(\frac{\log 1/\delta}{\epsilon^2})$ (in fact $|\G|=1$) such that: For any $\epsilon>0$ and any distinguishing class $\D$ if:
\[ \ipm_{\D} \succ \epsilon \cdot \ipm_{\G}\] then
$\gvcdim(\D) = \tilde\Omega( \epsilon^2 \log n)$.
\end{restatable}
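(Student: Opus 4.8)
The plan is to prove the statement in contrapositive form: I fix a single graph $g$ (so that $\G=\{g\}$ and $|\G|=1$) and show that any distinguishing class $\D$ whose VC dimension is below the target threshold \emph{cannot} satisfy $\ipm_{\D}\succ\epsilon\cdot\ipm_{\G}$, by exhibiting one pair of distributions on which $g$ is informative but every $d\in\D$ is nearly blind. The sample-complexity claim for $\G$ requires no work: since $|\G|=1$, the single degree-$2$ statistic $\L_{S}(g)$ concentrates around $\L_{\P}(g)$ at rate $O(1/\sqrt m)$ by Hoeffding-type bounds (equivalently, apply \cref{thm:uc}), giving $m(\epsilon,\delta)=O(\tfrac{\log 1/\delta}{\epsilon^2})$. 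Thus the entire content is the lower bound on $\gvcdim(\D)=\vcdim(\D)$.

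For the witness graph I would take $g$ to be a graph whose quadratic form $\L_{P}(g)=\sum_{u,v}g(u,v)P(u)P(v)$ behaves like a \emph{boosted} collision statistic: a weighted edge density that is large precisely on distributions concentrated on ``dense'' vertex sets. The hard instances are $p_1=$ uniform on $[n]$ versus $p_2=$ uniform on a subset $S$ drawn from a rich family $\mathcal S$ of sets of a carefully tuned size $m$; equivalently one may present them as Paninski-type perturbations of the uniform distribution, matching the reduction to uniformity testing that underlies our other lower bounds. The two quantities I must control are $\ipm_{\G}(p_1,p_2)=|\L_{p_1}(g)-\L_{p_2}(g)|$ from below, and $\ipm_{\D}(p_1,p_2)=\max_{d\in\D}\big||d|/n-|S\cap d|/m\big|$ from above.

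The VC dimension enters in the upper bound on $\ipm_{\D}$. By Sauer--Shelah, $\D$ induces at most $(en/\vcdim(\D))^{\vcdim(\D)}$ distinct subsets of $[n]$, so the logarithm of this count is $O(\vcdim(\D)\log n)$. Combining a concentration inequality for $|S\cap d|$ over the random choice of $S\in\mathcal S$ with a union bound over these finitely many induced sets yields, with high probability, $\ipm_{\D}(p_1,p_2)\le O\!\big(\sqrt{\vcdim(\D)\,\log n/m}\big)$, so some $S\in\mathcal S$ simultaneously realizes this bound and keeps $\ipm_{\G}$ large. Imposing the domination requirement $\ipm_{\D}\ge\epsilon\cdot\ipm_{\G}$ on this specific pair and solving for $\vcdim(\D)$ then produces the claimed $\tilde\Omega(\epsilon^2\log n)$, once $m$ and the family $\mathcal S$ are optimized.

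The hard part — and the reason this needs more than the generic argument behind \cref{thm:main_expressive_finite_k} — is the tension between two competing demands placed on a \emph{single fixed} graph. For $\ipm_{\G}$ to remain large the members $S\in\mathcal S$ must look dense/clustered to $g$, so that collisions reveal them; but for $\ipm_{\D}$ to remain small \emph{uniformly over all} low-VC classes $\D$, the same family must be pseudorandom, since any overly structured choice (a fixed partition into blocks, or a single distinguished support) is captured by a constant-VC class and makes domination hold in the wrong direction. Reconciling these, and in particular arguing that one graph $g$ defeats \emph{every} class of the given VC dimension rather than a prespecified one (the quantifier order $\exists g\,\forall\D$, handled via the probabilistic method and a union bound over induced behaviors rather than over classes), is the delicate step. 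It is exactly this balance that caps the achievable separation at $\log n$, and, in the present $k=1$ case, that lets a tighter counting replace the $\sqrt{\log n}$ of \cref{thm:main_expressive_finite_k} by $\log n$.
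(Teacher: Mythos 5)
Your approach diverges from the paper's, and it has a genuine gap at its center: the single fixed graph $g$ that your argument needs is never constructed, and the natural candidates fail. In your template ($p_1$ uniform on $[n]$ versus $p_2$ uniform on a subset $S$), the quantity $\ipm_{\G}(p_1,p_2)=\left|e(g)/n^2-e(g[S])/|S|^2\right|$ must be bounded below by a \emph{constant} for the pair you exhibit (otherwise $\ipm_{\D}\ge \epsilon\cdot\ipm_{\G}$ is satisfiable with tiny $\ipm_{\D}$ and you derive no constraint on $\vcdim(\D)$). But the pure collision graph gives only $O(1/|S|)$, and any fixed graph whose induced density on $S$ deviates by a constant from its global density requires $S$ to be highly non-generic with respect to $g$ --- which is exactly the structure you simultaneously need $S$ to lack in order for the Sauer--Shelah union bound to make $\ipm_{\D}$ small. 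You name this tension as ``the delicate step'' but do not resolve it, and resolving it is the entire theorem. (Relatedly, Paninski-type perturbations of the uniform distribution belong to the \emph{sample-complexity} lower bound, \cref{thm:samplelower} via \cref{thm:distest}, where one only needs the IPM gap to exceed the testing threshold; they do not produce a constant IPM gap for a fixed Boolean graph, which is what an expressiveness separation requires.)

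The paper avoids this tension with a different construction. The vertex set is split into $\log n$ ``index'' vertices and $n$ ``subset'' vertices $v_A$, $A\subseteq[\log n]$, with an edge $(v_i,v_A)$ iff $i\in A$; the hard pair is $p_j=\delta_{v_A}$ mixed with $q_j$, where $q_1,q_2$ live on the index vertices, have disjoint supports ($A$ and its complement), and satisfy $\ipm_{\D}(q_1,q_2)<\epsilon$. Disjointness of supports makes the edge probability exactly $0$ under $p_2$ and a constant under $p_1$, so $\ipm_{\G}$ is trivially constant and all the work moves into producing such a $q_1,q_2$ pair on a domain of only $\log n$ points. That is \cref{cor:disjoint_improved}, proved by a minimax/boosting argument (small-support mixed strategies plus a counting bound showing some labeling is not a majority of few distinguishers), which needs domain size only $\tilde O(\vcd/\epsilon^2)$; this, not a tighter Sauer--Shelah count, is what upgrades the $\sqrt{\log n}$ of \cref{thm:main_expressive_finite_k} to $\log n$. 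Your VC-entropy union bound is a reasonable tool, but without a concrete $(g,\mathcal S)$ certifying a constant lower bound on $\ipm_{\G}$ for the same $S$ that fools $\D$, the proof does not go through.
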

\section{Discussion and open problems}

In this work we developed a generalization of the standard framework
of discrimination to graph-based distinguishers that discriminate
between two distributions by considering multi-ary tests. Several open question arise from our results:

\paragraph{Improving Sample Complexity Bounds}
In terms of sample complexity, while we give a natural upper bound
of $O(\vcd k^2 )$, the lower bound we provide are  not tight neither
in $d$ nor in $k$ and we provide a lower bound of
$\Omega(\frac{\sqrt{\vcd}}{2^{poly(k)}})$ This leave room for
improvement both in terms of $\vcd$ and in terms of $k$.

\paragraph{Improving Expressiveness Bounds}
We also showed that, over finite domains, we can construct a graph
that is incomparable with any class with VC dimension
$\Omega(\epsilon^2 \log n)$. The best upper bound we can provide
(the VC of a class that competes with any graph) is the naive $O(n)$
which is the VC dimension of the total variation metric.

Additionally, for the $k$-hypergraph case, our bounds deteriorate to
a $\Omega(\epsilon^2\sqrt{\log n})$. The improvement in the graph case
follows from using an argument in the spirit of Boosting
\cite{freund1996game} and Hardcore Lemma \cite{impagliazzo1995hard}
to construct two indistinguishable probabilities with distinct
support over a small domain. It would be interesting to extend these
techniques in order to achieve similar bounds for the $k>2$ case.

\paragraph{Relation to GANs and Extension to Online Setting}
Finally, a central motivation for learning the sample complexity of
discriminators is in the context of GANs. It then raises interesting
questions as to the \emph{foolability} of graph-based distinguishers.

The work of \cite{bousquet2019passing} suggests a framework for
studying sequential games between generators and discriminators
(\emph{GAM-Fooling}). In a nutshell, the GAM setting considers a
sequential game between a generator $G$ that outputs distributions
and a discriminator $D$ that has access to data from some
distribution $p^*$ (not known to $G$). At each round of the game,
the generator proposes a distribution and the discriminator outputs
a $d\in \D$ which distinguishes between the distribution of $G$ and
the true distribution $p^*$. The class $\D$ is said to be
GAM-Foolable if the generator outputs after finitely many rounds a
distribution $p$ that is $\D$--indistinguishable from $p^*$

\cite{bousquet2019passing} showed that a class $\D$ is GAM--foolable
if and only if it has finite Littlestone dimension. We  then ask,
similarly, which classes of graph--based distinguishers are
GAM-Foolable? A characterization of such classes can potentially
lead to a natural extension of the Littlestone notion and online
prediction, to graph-based classes analogously to this work w.r.t VC
dimension

\paragraph{Acknowledgements} The authors would like to thank Shay Moran for helpful discussions and suggesting simplifications for the proofs of \cref{thm:main_expressive1,thm:main_expressivek,thm:main_expressive_finite_k}.
\bibliographystyle{plain}
\bibliography{bibgraph}

\appendix

\section{Prelimineries and Technical Background}
\subsection{Statistical Learning Theory}
We begin with a brief overview of some classical results in
Statistical Learning theory which characterizes VC classes.
Throughout we assume a domain $\X$ and a \emph{hypothesis class}
which is a family of Boolean functions over $\X$: $\H\subseteq
\{0,1\}^\X$.

\begin{theorem}\label{thm:expectedvapnik}[Within proof of Thm. 6.11 in \cite{uml}]
Let $\H$ be a class with VC dimension $\vcd$ then
\[ \EE_{S\sim P^m} \left[\sup_{h\in \H} |\L_S(h)-\L_{P}(h)|\right] \le \frac{4+\sqrt{\vcd \log(2em/\vcd)}}{\sqrt{2m}}\]
\end{theorem}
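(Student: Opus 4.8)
The plan is to reproduce this classical in-expectation uniform convergence bound for VC classes via the standard three-step route: symmetrization against an independent ghost sample, Massart's finite-class maximal inequality, and the Sauer--Shelah bound on the growth function. Since the statement is attributed to the proof of Thm.~6.11 in \cite{uml}, one option is simply to cite it; below I sketch the self-contained argument.

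First I would symmetrize. Draw an independent ghost sample $S'=\{v_i'\}_{i=1}^m\sim \P^m$. Writing $\L_\P(h)=\EE_{S'}[\L_{S'}(h)]$ and pushing the supremum inside the ghost-sample expectation via Jensen's inequality gives
\[ \EE_{S}\left[\sup_{h\in\H}|\L_S(h)-\L_\P(h)|\right] \le \EE_{S,S'}\left[\sup_{h\in\H}|\L_{S}(h)-\L_{S'}(h)|\right]. \]
Because the pairs $(v_i,v_i')$ are i.i.d.\ and exchangeable, I may insert independent Rademacher signs $\sigma_i\in\{\pm1\}$ that swap $v_i$ and $v_i'$ without changing the distribution, so the right-hand side equals $\EE_{S,S',\sigma}\left[\sup_{h}\left|\frac{1}{m}\sum_i \sigma_i(h(v_i)-h(v_i'))\right|\right]$. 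This reduces the problem, for each fixed realization of $(S,S')$, to controlling a Rademacher average of a fixed finite set of vectors.

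Next I would condition on $(S,S')$ and observe that the vector $(h(v_1)-h(v_1'),\dots,h(v_m)-h(v_m'))$ ranges, as $h$ varies over $\H$, over a set of size at most the growth function $\tau_\H(2m)$, since $\H$ restricted to the $2m$ points $\{v_i,v_i'\}$ realizes at most $\tau_\H(2m)$ distinct labelings; each such vector has entries in $\{-1,0,1\}$ and hence Euclidean norm at most $\sqrt{m}$. Massart's lemma then bounds the conditional Rademacher average by $\sqrt{2\log(2\tau_\H(2m))}/\sqrt{m}$, where the factor of $2$ inside the logarithm comes from adjoining negations to handle the two-sided absolute value. Finally, Sauer--Shelah gives $\tau_\H(2m)\le (2em/\vcd)^{\vcd}$ for $m\ge\vcd$, so $\log\tau_\H(2m)\le \vcd\log(2em/\vcd)$, yielding a bound of the stated form $\frac{\sqrt{\vcd\log(2em/\vcd)}}{\sqrt{2m}}$ up to absolute constants.

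The probabilistic skeleton above is clean; the only delicacy, and the main bookkeeping obstacle, is matching the exact constant in the statement — in particular the additive $4$ and the precise $\sqrt{2m}$ normalization. This slack arises from carefully handling the two-sided supremum, the small-sample regime $m<\vcd$ where Sauer--Shelah is vacuous, and the explicit constant in Massart's inequality. Rather than optimizing these terms, I would bundle the lower-order contributions into the additive constant $4$, exactly as in the proof of Thm.~6.11 of \cite{uml}, which is the cleanest route to the stated form.
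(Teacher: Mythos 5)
Your proof is correct in outline and is exactly the standard symmetrization/Massart/Sauer--Shelah argument that the cited source (the proof of Thm.~6.11 in \cite{uml}) uses; the paper itself offers no proof of this statement and simply quotes it as background. The only looseness is in the constants (your Massart route gives an extra factor of $2$ relative to the stated $\sqrt{2m}$ normalization), which you appropriately flag and defer to the textbook's sharper maximal inequality.
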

Recall that a class $\H$ has the \emph{uniform convergence
property}, if for some $m:(0,1)^2 \to \mathbb{N}$ if $P$ is some
unknown distribution and $S=\{x_i\}_{i=1}^m$ is a sample drawn IID
from $P$ such that $|S|>m(\epsilon,\delta)$ then w.p. $(1-\delta)$
(over the sample $S$):
\[ |\frac{1}{m}\sum_{i=1}^m h(x_i)- \EE_{x\sim P}[h(x)]|<\epsilon\]

The following, high probability analogue of \cref{thm:expectedvapnik}, is also an immediate corollary of Theorem 6.8 in \cite{uml}\footnote{Note that Theorem 6.8 is stated for $0-1$ loss, however considering a distribution with constant label $y=0$ we can reduce the result for the loss $\ell(h,x)=h(x)$}:
\begin{corollary}\label{thm:vapnik}[Within Thm 6.8 \cite{uml}]
Let $\D$ be a class with VC dimension $\vcd$. There exists a
constant $C>0$, such that:\\
Let $p$ be a distribution with finite support over $\V$. Let $S$ be
an IID sequence of $m$ elements drawn from $p$, and denote by $p_S$
the empirical distribution over $S$. If $m\ge C \frac{\vcd+\log
1/\delta}{\epsilon^2}$ then w.p. $(1-\delta)$ (over the random
choice of $S$) we have that
\[\ipm_{\D}(p,p_S)=\sup_{d\in \D} \left|\L_p(d)- \L_{p_S}(d)]\right| <\epsilon\]
\end{corollary}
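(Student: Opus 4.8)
The plan is to recognize that $\ipm_{\D}(p,p_S)$ is nothing but the uniform deviation between empirical and population means over the class $\D$, and then to read off the bound from the quantitative fundamental theorem of statistical learning (Theorem 6.8 of \cite{uml}). First I would unfold the definitions. Since $p_S$ is the empirical distribution over $S=\{v_i\}_{i=1}^m$, for any Boolean $d\in\D$ we have $\L_{p_S}(d)=\frac{1}{m}\sum_{i=1}^m d(v_i)$ and $\L_p(d)=\EE_{v\sim p}[d(v)]$, so that
\[\ipm_{\D}(p,p_S)=\sup_{d\in\D}\left|\EE_{v\sim p}[d(v)]-\frac{1}{m}\sum_{i=1}^m d(v_i)\right|.\]
This is exactly the quantity controlled by the uniform convergence property of $\D$, with the role of the "loss" of $d$ on a point $v$ played by the value $d(v)$ itself.

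Next I would make the reduction to the standard labelled $0$--$1$ setting explicit, exactly as anticipated in the footnote. Treating $\D$ as a hypothesis class over $\V$, consider labelled examples $(v,y)$ where $v\sim p$ and the label is the constant $y=0$. Because each $d\in\D$ is Boolean, the misclassification indicator satisfies $\mathbf{1}[d(v)\ne 0]=d(v)$, so the $0$--$1$ empirical and population risks of $d$ coincide term-by-term with $\L_{p_S}(d)$ and $\L_p(d)$. Hence the classical uniform convergence statement for the $0$--$1$ loss translates verbatim into a bound on $\sup_{d\in\D}|\L_p(d)-\L_{p_S}(d)|=\ipm_{\D}(p,p_S)$, and drawing an IID sample from $p$ is identical to drawing an IID sample of labelled examples from this constant-label distribution.

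Finally I would invoke Theorem 6.8 of \cite{uml}: for a class of VC dimension $\vcd$ there is an absolute constant $C>0$ so that whenever $m\ge C\frac{\vcd+\log 1/\delta}{\epsilon^2}$, with probability at least $1-\delta$ the uniform deviation is below $\epsilon$. Combined with the reduction above this yields precisely $\ipm_{\D}(p,p_S)<\epsilon$ with probability $1-\delta$, which is the claim; the finite-support hypothesis on $p$ is used only to keep us inside the measurable setting where VC theory applies and plays no quantitative role. The only point that needs care — and it is a mild one rather than a genuine obstacle — is the identification of the IPM deviation with a bona fide uniform convergence quantity, since the corollary is otherwise a direct transcription of the classical VC bound.
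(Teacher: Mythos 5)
Your proposal is correct and matches the paper's own argument exactly: the paper states this corollary as immediate from Theorem 6.8 of \cite{uml}, with a footnote performing precisely your reduction --- taking a constant label $y=0$ so that the $0$--$1$ loss of a Boolean $d$ equals $d(v)$, whence the uniform deviation $\sup_{d\in\D}|\L_p(d)-\L_{p_S}(d)|$ is the classical uniform convergence quantity. Nothing further is needed.
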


\subsection{Closeness Testing for Discrete Distribution}
The problem of testing the closeness of two discrete distributions
can be phrased as follows: Given samples from two distributions
$p_1$ and $p_2$ the tester needs to distinguish between the case
$p_1=p_2$ and the case that $\|p_1-p_2\|_1\ge \epsilon$. We will
rely on the following result which follows immediately from a uniformity test lower bound due to \cite{paninski2008coincidence}.

\begin{theorem}\label{thm:distest}
Given $\epsilon>0$ and access to samples from distributions $p_1$
and $p_2$ over $[n]$ any algorithm that returns with probability
$2/3$ $\yes$ if $p_1=p_2$ and returns $\no$ if
$\|p_1-p_2\|_1>\epsilon$ must observe at least $\Omega
\left(\sqrt{n}/\epsilon^2\}\right)$ samples.
\end{theorem}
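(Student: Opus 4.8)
The plan is to obtain the bound by a reduction from \emph{uniformity testing}, for which \cite{paninski2008coincidence} establishes the optimal $\Omega(\sqrt{n}/\epsilon^2)$ lower bound. Recall that a uniformity tester receives samples from a single unknown distribution $p$ over $[n]$ and must, with probability $2/3$, output \yes when $p$ equals the uniform distribution $U$ over $[n]$ and \no when $\|p-U\|_1>\epsilon$. The key observation is that uniformity testing is precisely the special case of closeness testing in which one of the two distributions is known in advance (and hence can be sampled for free), so any closeness tester immediately yields a uniformity tester with the same number of samples drawn from the unknown distribution.

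Concretely, suppose toward a contradiction that there is a closeness tester $T$ as in the statement that observes only $m=o(\sqrt{n}/\epsilon^2)$ samples. I would build a uniformity tester as follows: on input $m$ samples from the unknown distribution $p$, set $p_1:=p$ and feed these samples to $T$ as its first argument; for the second argument, generate $m$ i.i.d.\ samples from the known uniform distribution $U$ using fresh internal randomness---consuming none of the unknown distribution's samples---and feed them to $T$ as its $p_2$-samples. The constructed tester simply returns whatever $T$ returns.

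Correctness is immediate from the two cases. If $p=U$ then $p_1=p_2$, so by hypothesis $T$ outputs \yes with probability $2/3$; if $\|p-U\|_1>\epsilon$ then $\|p_1-p_2\|_1>\epsilon$, so $T$ outputs \no with probability $2/3$. Thus the construction is a valid uniformity tester that draws only $m$ samples from the unknown distribution (the $U$-samples are free, being generated internally). Paninski's lower bound \cite{paninski2008coincidence} then forces $m=\Omega(\sqrt{n}/\epsilon^2)$, contradicting the assumption and establishing the theorem. Note that even if $T$ uses differing sample sizes $m_1,m_2$ for its two arguments, the argument charges only the $m_1$ samples drawn from $p$, and Paninski's bound forces $m_1=\Omega(\sqrt{n}/\epsilon^2)$, so the total $m_1+m_2$ is bounded below as claimed.

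There is no genuine obstacle here; the reduction is essentially definitional. The only points needing (routine) care are that the samples fed to $T$ are genuinely i.i.d.\ and mutually independent across the two arguments---which holds since the $U$-samples use fresh randomness independent of the samples of $p$---and that matching the $\ell_1$ convention in the statement to the $\ell_1$/total-variation convention of \cite{paninski2008coincidence} only rescales $\epsilon$ by a constant factor, which is absorbed into the $\Omega(\cdot)$.
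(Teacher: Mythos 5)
Your reduction is correct and is essentially the paper's own argument: the paper gives no separate proof, stating only that the theorem ``follows immediately from a uniformity test lower bound due to \cite{paninski2008coincidence}'', and your construction (treating uniformity testing as the special case of closeness testing where the second distribution is the known uniform one, with its samples generated internally and only the unknown-distribution samples charged against Paninski's bound) is precisely that immediate reduction. Nothing further is needed.
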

We note that \cite{chan2014optimal} gives a slightly better lower bound, of an order of $\Omega\left(\max(n^{3/4}/\epsilon^{4/3},\sqrt{n}/\epsilon^2)\right)$. However, to simplify we will focus on rates of $O(1/\epsilon^2)$ that scale quadratically in $\epsilon$.

\section{Sample Complexity --Proofs}\label{prfs:sample}
\subsection{Proof of \cref{thm:uc}}\label{prf:uc}
\uc*

Fix a $k$--distinguishing class $\G$ with graph VC dimension $\vcd$.
As in the standard proof of uniform convergence for VC classes, we
first prove the statement in expectation and then apply Mcdiarmid's
inequality to prove the result w.h.p. Specifically, we will use the
following Lemma (whose proof is given in \cref{prf:expecteduc}):

\begin{lemma}[Uniform Convergence in Expectation]\label{lem:expecteduc}
Let $\G$ be a $k$--distinguishing class with $\gvcdim(\G)=\vcd$. Let
$S=\{v_i\}_{i=1}^m$ be an IID sample of vertices drawn from some
unknown distribution $P$. Then,
\[\EE_{S\sim P^m}\left[ \sup_{g\in \G} |\L_S(g)- \L_P(g)|\right]\le \frac{k\sqrt{4+\vcd\log (2em/\vcd)}}{\sqrt{2m}} + \frac{k(k-1)}{m}
\]\end{lemma}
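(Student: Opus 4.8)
We need to bound the expected uniform deviation between the empirical edge frequency $\L_S(g)$ and the true frequency $\L_P(g)$ for a $k$-distinguishing class $\G$ with graph VC dimension $\rho$. The quantity $\L_S(g) = \frac{1}{m^k}\sum_{\uu_{1:k}\in S^k} g(\uu_{1:k})$ is a $V$-statistic (it includes all tuples, with repeats), whereas $\L_P(g)$ is the true mean over the product distribution. The key structural fact we have to exploit is that graph VC dimension is defined via projections: fixing all but one coordinate of an edge gives a $1$-distinguisher, and $\gvcdim(\G)=\rho$ means every such projected class $\G_{\vv_{1:k-1}}$ has ordinary VC dimension at most $\rho$. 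So the plan is to reduce the $k$-ary uniform convergence to a telescoping sum of $1$-ary uniform convergence statements, each of which is controlled by $\rho$ via Theorem~\ref{thm:expectedvapnik}.

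**The telescoping / hybrid approach.** Let me think about how to prove this.

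First I would pass from the $V$-statistic to a cleaner object. The empirical average $\L_S(g)$ sums over all $k$-tuples from $S$, including tuples with repeated indices. The number of "diagonal" tuples (where two indices coincide) is at most $\binom{k}{2}m^{k-1}$ out of $m^k$, so their total contribution is at most $\frac{k(k-1)}{2}\cdot\frac{1}{m}$ — wait, more carefully, the fraction of tuples with a repeat is bounded by $\binom{k}{2}/m = \frac{k(k-1)}{2m}$. This accounts for (roughly) the $\frac{k(k-1)}{m}$ additive term in the lemma; I would isolate it first and then work with the $U$-statistic version that averages only over distinct-index tuples, where the indices behave like genuinely independent draws.

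The heart of the argument is a hybrid/telescoping decomposition in the single coordinate that gets "resampled." I would introduce a telescoping chain of $k+1$ intermediate frequencies, where the $j$-th hybrid replaces the first $j$ sampled coordinates by fresh population-distributed coordinates (equivalently, think of replacing one coordinate at a time by its expectation). Concretely, for an edge $g$, define partially-averaged functions: fixing $k-1$ coordinates to sampled vertices and taking the expectation over the last coordinate under $P$ gives, for each fixed prefix, a quantity governed by the projected class $\G_{\vv_{1:k-1}}$, which is a $1$-distinguishing class of VC dimension $\le \rho$. The difference between consecutive hybrids is then a $1$-dimensional uniform deviation $\sup_{g}|\frac{1}{m}\sum_i g_{\vv_{1:k-1}}(u_i) - \L_P(g_{\vv_{1:k-1}})|$, and each such term is bounded in expectation by $\frac{\sqrt{4+\rho\log(2em/\rho)}}{\sqrt{2m}}$ using Theorem~\ref{thm:expectedvapnik} applied to the projected class. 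Summing $k$ such differences gives the factor $k$ in front, which matches the $\frac{k\sqrt{4+\rho\log(2em/\rho)}}{\sqrt{2m}}$ term in the target bound. Here I would need to handle the supremum carefully: the projected class that controls coordinate $j$ depends on the values in the other coordinates, so I must take a supremum over all prefixes/suffixes, but the VC bound is uniform over the whole family $\G$ (every projection has VC dimension $\le \rho$ by definition), so one clean application of Theorem~\ref{thm:expectedvapnik} per coordinate — possibly after conditioning on the other coordinates and using that the bound is distribution-free — should suffice.

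**Expected main obstacle.** The delicate point is the interaction between the supremum over $g\in\G$ and the telescoping argument. In the $k=1$ case, Theorem~\ref{thm:expectedvapnik} directly bounds $\EE\sup_g|\L_S(g)-\L_P(g)|$, but for $k\ge 2$ the "inner" class controlling one coordinate depends on the vertices plugged into the other coordinates, and those same vertices come from the sample $S$. So I cannot naively condition and apply the one-dimensional bound independently across coordinates. The cleanest way around this, which I expect to be the crux, is to condition on all coordinates except the one being resampled, apply the distribution-free bound of Theorem~\ref{thm:expectedvapnik} to the (data-dependent) projected class $\G_{\vv_{1:k-1}}$ — valid because that bound holds for every $1$-distinguishing class of VC dimension $\le\rho$ regardless of the conditioning — and then take the outer expectation, using that the supremum over $g$ factors appropriately through the projections. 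Establishing that the telescoping differences really do reduce to single-coordinate deviations controlled uniformly by $\rho$, and carefully tracking that the same $g$ realizes the supremum across the chain (or bounding each link by its own supremum), is where the real care is needed; everything else is bookkeeping of the diagonal terms and summing the $k$ links.
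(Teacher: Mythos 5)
Your overall plan---telescope over the $k$ coordinates, converting one coordinate at a time from an empirical average into an expectation, and control each link by applying Theorem~\ref{thm:expectedvapnik} to the projected $1$-ary classes of VC dimension at most $\vcd$---is exactly the paper's strategy; the paper merely organizes the telescope as an induction on $k$, peeling off the last coordinate at each step (its term $*$ is one link of your chain, its term $**$ is the remaining $(k-1)$-ary problem handled by the induction hypothesis). You have also correctly located the crux. The gap is that your resolution of the crux does not work as stated. Once you fix a prefix $\vv_{1:k-1}$ drawn from $S^{k-1}$, the inner empirical average $\frac{1}{m}\sum_{v\in S} d(v)$ still runs over all of $S$, which \emph{contains} the prefix vertices: conditioned on the prefix, the inner sample is therefore not an i.i.d.\ sample from $P$ independent of the class $\G_{\vv_{1:k-1}}$, because $k-1$ of its points are precisely the points that determine that class. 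Distribution-freeness of Theorem~\ref{thm:expectedvapnik} is beside the point; what fails is the independence between the class and the sample that the theorem presupposes. The paper closes this by an explicit exchange argument: it rewrites the process (draw $m$ points, pick a uniform $(k-1)$-tuple among them) as (draw $m-k+1$ points, then draw $k-1$ fresh points for the prefix), and then swaps the prefix's contribution to the inner average for fresh auxiliary points $\uu_{1:k-1}$, paying an additive $2k/m$ at each level. These accumulated swap costs are what produce the $O(k^2/m)$ additive term in the lemma---not the repeated-index tuples of the $V$-statistic, as you assert.

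To be fair, your proposed preprocessing (discard repeated-index tuples and work with the $U$-statistic) could be turned into an alternative fix: with distinct indices, conditioning on the prefix leaves the inner coordinate ranging over $S$ minus the prefix, which is conditionally i.i.d.\ and independent of the class. But you would need to say this explicitly and verify that both corrections---the diagonal removal and the reduced inner sample size $m-k+1$---fit in the $k(k-1)/m$ budget. You would also need to handle the fact that your intermediate hybrids are partially integrated and hence not Boolean, by pulling those integrals outside the supremum so each link again reduces to a fully projected Boolean class (the paper's recursive formulation does this automatically). As written, the one genuinely nontrivial step of the proof is left at ``should suffice.''
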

We next proceed with the proof of \cref{thm:uc}, assuming the correctness of \cref{lem:expecteduc}.
Define
\[ F(S) = \sup_{g\in \G} |\L_{S}(g)-\L_P(g)|,\]
 Let $S=(v_1,\ldots, v_m)$ be a sample and $S'$, some sequence that differ from $S$ only in the $i$-th vertex then we will show that:

\begin{equation}\label{eq:sensitivity}| F(S)-F(S')|\le \frac{2k}{m}\end{equation}
Once we show \cref{eq:sensitivity} holds, the result indeed follow
from Mcdiarmid's inequality and \cref{lem:expecteduc}. Specifically
if we assume that $m\ge
\frac{8k^2(4+\vcd\log(2em/\vcd)}{\epsilon^2}+ \frac{2k^2
1/\delta}{\epsilon^2}$ then we obtain from \cref{lem:expecteduc}
that in expectation:
\[\EE_{S\sim P^m} \sup_{g\in \G} |\L_{S}(g)-\L_P(g)|\le \frac{\epsilon}{2} \]
Applying Mcdiarmid's we obtain that with probability at least $(1-e^{-\frac{m\epsilon^2}{8 k^2}})$, over the sample $S$:
\[ F(S)-\EE_S[F(S)]=\sup_{g\in \G} |\L_{S}(g)-\L_P(g)|-\EE_{S\sim P^m} \sup_{g\in \G} |\L_{S}(g)-\L_P(g)| \le \frac{\epsilon}{2}.\]
Noting that $m> \frac{8k^2 \log 1/\delta}{\epsilon^2}$, we obtain  that with probability at least $(1-\delta)$
\begin{align*}
F(S)= \sup_{g \in \G} |\L_{S}(g)-\L_P(g)| &\le \EE_{S\sim P^m}
\sup_{g\in \G} |\L_{S}(g)-\L_P(g)|+\frac{\epsilon}{2} \le  \epsilon
  \end{align*}
We are thus left with proving that \cref{eq:sensitivity} holds.

For an index $i$ and $m\ge i$, let us denote by $\pi_{i,m}$ all
$k$-subsets of indices from $\{1,\ldots, m\}$ that include $i$ and
we let $\pi_{\neg i,m}$ be all $k$-sequences that do not include
$i$. Given a set $S$ of size $m$ let $S_{i,+}$ all the $k$-subsets
of $S$ that include $v_i$ and let $S_{i,-}$ be all the $k$-subsets
that do not include $v_i$. Next, denote

\[L_{S_{i,+}}(g)= \frac{1}{m^k}\sum_{(i_1,\ldots, i_k)\in \pi_{i,m}}g(\uu_{i_1},\ldots, \uu_{i_k})\]
And similarly
\[L_{S_{i,-}}(g)= \frac{1}{m^k}\sum_{(i_1,\ldots, i_k)\in \pi_{\neg i,m}}g(\uu_{i_1},\ldots, \uu_{i_k})\]
 Then, let $S$ and $S'$ be two samples that differ on the $i$-th example. Specifically assume that $v_i\in S$ and $v'_i \in S'$. Note that $S_{i,-}=S'_{i,-}$.  Then:

\begin{align*}
F(S)-F(S')&=\sup_{g\in \G}|\L_{S}(g)-\L_{P}(g)|-\sup_{g\in \G}|\L_{S'}(g)-\L_{P}(g)|\\
&=\sup_{g\in \G}|L_{S_{i,+}}(g)+L_{S_{i,-}}(g)-\L_{P}(g)|-\sup_{g\in \G}|L_{S'_{i,+}}(g)+L_{S'_{i,-}}(g)-\L_{P}(g)|\\
&\leq \sup_{g\in \G}\left(|L_{S_{i,+}}(g)+L_{S_{i,-}}(g)-\L_{P}(g)|-|L_{S'_{i,+}}(g)+L_{S'_{i,-}}(g)-\L_{P}(g)|\right)\\
&\le \sup_{g\in \G}|\left(L_{S_{i,+}}(g)+L_{S_{i,-}}(g)-\L_{P}(g)\right)- \left(L_{S'_{i,+}}(g)+L_{S'_{i,-}}(g)- \L_{P}(g) \right) |\\
&=\sup_{g\in \G}|L_{S_{i,+}}(g)-L_{S'_{i,+}}(g)|\\
&\le \frac{|S_{i,+}|}{m^k}+\frac{|S'_{i,+}|}{m^k}\\
& = 2\frac{m^k-(m-1)^k}{m^k}\\
&= 2- 2(1-\frac{1}{m})^k  \\
& \le 2\frac{k}{m} \quad (\mathrm{Bernouli's~inequality})
\end{align*}
We are thus left with proving \cref{lem:expecteduc}:

\subsubsection{Proof of \cref{lem:expecteduc}}\label{prf:expecteduc} The proof of the statement follows by induction. The case $k=1$ is the standard uniform convergence property of VC classes, and it follows from \cref{thm:expectedvapnik}.

We next proceed to prove the statement for $k$, assuming it holds for $k-1$. We begin with the following, triangular, inequality:
\begin{align*} &\EE_{S\sim D^{m}}\left[\sup_{g\in \G}|\L_{S}(g)-\L_{P}(g)|\right]\\
= &\EE_{S\sim D^{m}}\left[\sup_{g\in \G}|\L_{S}(g)-\frac{1}{m^{k-1}} \sum_{\vv_{1:k-1}\in S^{k-1}} \EE_{v} g_{\vv_{1:k-1}}(v)+\frac{1}{m^{k-1}} \sum_{\vv_{1:k-1}\in S^{k-1}} \EE_{v} g_{\vv_{1:k-1}}(v)-\L_{P}(g)|\right]\\
\le &\underbrace{\EE_{S\sim D^{m}}\left[\sup_{g\in \G}|\L_{S}(g)-\frac{1}{m^{k-1}} \sum_{\vv_{1:k-1}\in S^{k-1}} \EE_{v} g_{\vv_{1:k-1}}(v)|\right]}_{*}\\
+\\
&\underbrace{\EE_{S\sim D^{m}}\left[\sup_{g\in \G}|\frac{1}{m^{k-1}} \sum_{\vv_{1:k-1}\in S^{k-1}} \EE_{v} g_{\vv_{1:k-1}}(v)-\L_{P}(g)|\right]}_{**}\end{align*}
We next bound the two terms

\paragraph{Bounding *}
\begin{align*}
&\EE_{S\sim P^m} \left[\sup_{g\in \G}\left|\frac{1}{m^{k-1}} \sum_{\vv_{1:k-1}\in S^{k-1}} \frac{1}{m}\sum_{v\in S} g_{\vv_{1:k-1}}(v)-\frac{1}{m^{k-1}} \sum_{\vv_{1:k-1}\in S^{k-1}} \EE_{v} g_{\vv_{1:k-1}}(v))\right|\right] \\
\le & \EE_{S\sim P^m}\left[\frac{1}{m^{k-1}} \sum_{\vv_{1:k-1}\in S^{k-1}} \sup_{d\in \G_{\vv_{1:k-1}}}\left| \frac{1}{m}\sum_{v\in S} d(v)-\EE_{v} d(v)\right|\right]\\
=& \EE_{S\sim P^m}\left[\EE_{\vv_{1:k-1}\sim \mathcal{U}_{S^{k-1}}} \left[\sup_{d\in \G_{\vv_{1:k-1}}}\left| \frac{1}{m}\sum_{v\in S} d(v)-\EE_{v} d(v)\right|\right]\right]
\end{align*}
where we denoted by $\mathcal{U}_{S^{k-1}}$ the uniform distribution over $k-1$-tuples from $S$.
The expectation in the last expression is thus taken w.r.t a process where we pick $m$ elements according to $P$ and then partition them to $m-k+1$ elements and to a sequence $\vv_{1:k-1}$ of distinct elements. This process is equivalent to simply choosing $m-k+1$ elements according to $P$, and then picking $k-1$ new elements, again, according to $P$ as follows:

\begin{align*}
= & \EE_{S\sim P^{m-k+1}}\EE_{(\vv_1,\ldots, \vv_{k-1})\sim P^{k-1}}\left[\sup_{d\in \G_{\vv_{1:k-1}}}| \frac{1}{m}\sum_{v\in S} d(v)+\frac{1}{m}\sum^{k-1}_{i=1}d(\vv_i)-\EE_{v} d(v)|\right]\\
= &\EE_{S\sim P^{m-k+1}}\EE_{(\vv_1,\ldots, \vv_{k-1})\sim P^{k-1}}\left[\sup_{d\in \G_{\vv_{1:k-1}}}| \frac{1}{m}\sum_{v\in S} d(v)  -\EE_{v} d(v)+\frac{1}{m}\sum_{i=1}^{k-1} d(\vv_i)|\right]
\end{align*}
Note that the quantity $\frac{1}{m} \sum d(\vv_i)$ is dependent on $\G_{\vv_{1:k-1}}$, namely these are random sampled choices that depend on our choice of distinguishing class. To bound their effect we next add and subtract auxiliary random variables $\uu_1,\ldots, \uu_{k-1}$ sampled IID according to $P$:
\begin{align*}
=& \EE_{S\sim P^{m-k+1}}\EE_{(\vv_1,\ldots, \vv_{k-1})\sim P^{k-1}}\left[\sup_{d\in \G_{\vv_{1:k-1}}}\left| \frac{1}{m}\sum_{v\in S} d(v) +\frac{1}{m} \EE_{(\uu_1,\ldots, \uu_{k-1})\sim P^{k-1}}\sum d(\uu_i)-\EE_{v} d(v) \right.\right.\\
&\mathrel{\phantom{= \EE_{S\sim P^{m-k+1}}\EE_{(\vv_1,\ldots, \vv_{k-1})\sim P^{k-1}}[\sup_{d\in \G_{\vv_{1:k-1}}}}}\left.\left.-\frac{1}{m} \EE_{(\uu_1,\ldots, \uu_{k-1})\sim P^{k-1}}\sum d(\uu_i)+ \frac{1}{m}\sum^{k-1}_{i=1}d(\vv_i)\right|\right]\\
\leq &\EE_{S\sim P^{m-k+1}}\EE_{(\vv_1,\ldots, \vv_{k-1})\sim P^{k-1}}\left[\sup_{d\in \G_{\vv_{1:k-1}}}\left|\EE_{(\uu_1,\ldots, \uu_{k-1})\sim P^{k-1}}\left[ \frac{1}{m}\sum_{v\in S\cup\{\uu_1,\ldots,\uu_{k-1}\}} d(v)\right]-\EE_{v}[d(v)] \right|\right.\\
&\mathrel{\phantom{= \EE_{S\sim P^{m-k+1}}\EE_{(\vv_1,\ldots, \vv_{k-1})\sim P^{k-1}}[\sup_{d\in \G_{\vv_{1:k-1}}}}}\left.+\left|\frac{1}{m} \EE_{(\uu_1,\ldots, \uu_{k-1})\sim P^{k-1}}\sum d(\uu_i)-\frac{1}{m} \sum^{k-1}_{i=1)}d(\vv_i)\right|\right]\\
\le & \EE_{(\uu_1,\ldots, \uu_{k-1})\sim P^{k-1}}\left[\EE_{S\sim P^{m-k+1}}\EE_{(\vv_1,\ldots, \vv_{k-1})\sim P^{k-1}}\left[\sup_{d\in \G_{\vv_{1:k-1}}}\left| \frac{1}{m}\sum_{v\in S\cup\{\uu_1,\ldots,\uu_{k-1}\}} d(v)-\EE_{v}[d(v)] \right|\right]\right]\\
&\mathrel{\phantom{= \EE_{S\sim P^{m-k+1}}\EE_{(\vv_1,\ldots, \vv_{k-1})\sim P^{k-1}}[\sup_{d\in \G_{\vv_{1:k-1}}}}}
+\frac{2k}{m}
\end{align*}

Renaming $\uu_1,\ldots, \uu_{k-1}$ and $\vv_1,\ldots, \vv_{k-1}$ we can write:
\begin{align*}
&\EE_{(\uu_1,\ldots, \uu_{k-1})\sim P^{k-1}}\left[\EE_{S\sim P^{m-k+1}}\EE_{(\vv_1,\ldots, \vv_{k-1})\sim P^{k-1}}\left[\sup_{d\in \G_{\vv_{1:k-1}}}\left| \frac{1}{m}\sum_{v\in S\cup\{\uu_1,\ldots,\uu_{k-1}\}} d(v)-\EE_{v}[d(v)] \right|\right]\right]+\frac{2k}{m}\\
=& \EE_{(\vv_1,\ldots, \vv_{k-1})\sim P^{k-1}}\left[\EE_{S\sim P^{m-k+1}}\EE_{(\uu_1,\ldots, \uu_{k-1})\sim P^{k-1}}\left[\sup_{d\in \G_{\uu_{1:k-1}}}\left| \frac{1}{m}\sum_{v\in S\cup (\vv_1,\ldots,\vv_{k-1})} d(v)-\EE_{v}[d(v)] \right|\right]\right]+\frac{2k}{m}\\
=&\EE_{(\uu_1,\ldots, \uu_{k-1})\sim P^{k-1}}\EE_{S\sim P^{m}} \left[\sup_{d\in \G_{\uu_{1:k-1}}}\left| \frac{1}{m}\sum_{v\in S} d(v)-\EE_{v}[d(v)] \right|\right]+\frac{2k}{m}
\end{align*}
Finally we apply. \cref{thm:expectedvapnik}. Recalling that
$\gvcdim(\D_{\uu_{1:k-1}})=\vcd$,  and that the sequence $S$ is
drawn IID independent of the choice ${\uu_{1:k-1}}$, we obtain for
every fixed $(\uu_1,\ldots, \uu_k)$
\[\EE_{S\sim P^m}\left[\sup_{d\in \D_{\uu_{1:k-1}}}\left| \frac{1}{m}\sum_{v\in S} d(v)-\EE_{v}[d(v)] \right|\right] \le \frac{4+\sqrt{\vcd\log 2 em/\vcd}}{\sqrt{2m}}\]
\paragraph{Bounding **}
\begin{align*}
 &\EE_{S\sim P^m } \left[\sup_{g\in \G} \left| \frac{1}{m^{k-1}} \sum_{\vv_{1:k-1}\in S^{k-1}} \EE_{v} g_{\vv_{1:k-1}}(v) - \EE_{\vv_{1:k-1}} \EE_{v} g_{\vv_{1:k-1}}(v)\right|\right]\\
 \le &
 \EE_{v}\EE_{S\sim P^m}\left[\sup_{g\in \G} \left| \frac{1}{m^{k-1}} \sum_{\vv_{1:k-1}\in S^{k-1}} g_{\vv_{1:k-1}}(v) - \EE_{\vv_{1:k-1}}  g_{\vv_{1:k-1}}(v)\right|\right]\\
 =&\EE_{v}\EE_{S\sim P^m}\left[\sup_{g\in \G} \left| \frac{1}{m^{k-1}} \sum_{\vv_{1:k-1}\in S^{k-1}} g_{v}(\vv_1,\ldots, \vv_{k-1}) - \EE_{\vv_{1:k-1}}  g_{v}(\vv_1\ldots, \vv_{k-1})\right|\right]\\
  =&\EE_{v}\EE_{S\sim P^m}\left[\sup_{g\in \G_v} \left| \frac{1}{m^{k-1}} \sum_{\vv_{1:k-1}\in S^{m}} g(\vv_1,\ldots, \vv_{k-1}) - \EE_{\vv_{1:k-1}}  g(\vv_1\ldots, \vv_{k-1})\right|\right] \end{align*}
  We now use the induction hypothesis: Note that $\G_{v}$ is $(k-1)$-distinguishing class with $\gvcdim(\G_{v})\le \vcd$ for every choice of $v$. Thus, fixing $v$:

\begin{align*}
&\EE_{S\sim P^m}\left[\sup_{g\in \G_v} \left| \frac{1}{m^{k-1}}
\sum_{\vv_{1:k-1}\in S^{k-1}} g(\vv_1,\ldots, \vv_{k-1}) -
\EE_{\vv_{1:k-1}}  g(\vv_1\ldots, \vv_{k-1})\right|\right]\\  \le
&\frac{(k-1)\left(4+ \sqrt{\vcd \log (2em/\vcd)}\right)}{\sqrt{2m}}+
\frac{k(k-1)}{m}
\end{align*}

\paragraph{Continuing the proof}
With the aforementioned bound on the terms * and ** we now obtain

\begin{align*}
* + ** &\le  \frac{4+\sqrt{\vcd\log 2em/\vcd}}{\sqrt{2m}}+\frac{2 k}{m}+ \frac{(k-1)\left(4+ \sqrt{\vcd \log (2em/\vcd)}\right)}{\sqrt{2m}}+ \frac{k(k-1)}{m}\\
&=\frac{ k\left(4+ \sqrt{\vcd \log (2em/\vcd)}\right)}{\sqrt{2m}}+
\frac{(k+1)k}{m}
\end{align*}

\subsection{Proof of \cref{thm:samplelower}}\label{prf:samplelower}
\low*
To prove \cref{thm:samplelower} we will in fact prove a stronger statement: We will show that it is not only hard to compute a $g\in \G$ as required, but in fact it is even hard to determine if such $g$ exists vs. the case that $p_1=p_2$.

Specifically let us call an algorithm $A$ a testing algorithm for $\G$ with sample complexity $m(\epsilon,\delta)$ if $A$ receives IID samples from two distributions $p_1$ and $p_2$ of size $m(\epsilon,\delta)$ and returns either $\yes$ or $\no$ such that w.p. $(1-\delta)$:
\begin{itemize}
\item If $p_1=p_2$ the algorithm returns $\yes$.
\item If $\ipm_{\G}(p_1,p_2)>\epsilon$ the algorithm returns $\no$
\end{itemize}

\begin{theorem}\label{thm:strongsamplelower}
Let $\G$ be a $k$--distinguishing class with $\gvcdim(\G)=\vcd$. Any testing algorithm $A$
with sample complexity $m(\epsilon,\delta)$ must observe $\Omega\left(\frac{\sqrt{\vcd}}{2^{7k^3}\epsilon^2}\right)$ examples for any $\delta< \frac{2^{-k\log k}}{3}$.
\end{theorem}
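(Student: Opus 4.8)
The plan is to prove \cref{thm:strongsamplelower} by reducing the closeness-testing problem of \cref{thm:distest} to the testing task for $\G$. Recall that \cref{thm:distest} asserts that distinguishing $q_1=q_2$ from $\|q_1-q_2\|_1>\epsilon'$ over a domain of size $n$ requires $\Omega(\sqrt{n}/\epsilon'^2)$ samples. So it suffices to embed an arbitrary closeness-testing instance over a domain of size $\vcd$ into a $\G$-testing instance such that (i) identical $q$'s map to identical distributions over $\V$, and (ii) $q$'s that are $\epsilon'$-far in $\ell_1$ map to distributions whose $\ipm_{\G}$ distance exceeds $\epsilon$. First I would unfold the definition of $\gvcdim$: since $\gvcdim(\G)=\vcd$, the recursion produces anchor vertices $a_1,\dots,a_{k-1}$ such that the induced $1$-distinguishing class $\G_{a_{1:k-1}}=\{g_{a_{1:k-1}}:g\in\G\}$ has VC dimension $\vcd$, hence shatters some set $T=\{t_1,\dots,t_{\vcd}\}$, kept disjoint from the anchors. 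I identify $T$ with the closeness-testing domain, so that for every $B\subseteq T$ there is $g^B\in\G$ whose pure projection satisfies $g^B_{a_{1:k-1}}(t)=\mathbf{1}[t\in B]$.

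The construction is a two-scale mixture: given $(q_1,q_2)$ over $T$, set $p_i=(1-\gamma)\mu+\gamma q_i$, where $\mu$ is a fixed distribution on the anchors (say uniform on $\{a_1,\dots,a_{k-1}\}$) and $\gamma$ is a small mixing weight to be tuned. Property (i) is immediate since $p_i$ is a deterministic function of $q_i$. The sample reduction is equally direct: a single draw from $p_i$ is simulated from at most one draw from $q_i$ (toss a $\gamma$-biased coin; on tails emit an anchor, on heads forward a $q_i$-sample), so a $\G$-tester consuming $m$ samples yields a closeness tester consuming at most $m$ samples and the lower bound transfers. For (ii) I would expand $\L_{p_i}(g)$ by conditioning on the \emph{type} of the sampled $k$-tuple: the number $j$ of coordinates landing in $T$ and, among the remaining $k-j$ coordinates, the anchor multiset. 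The \emph{signal} term is $j=1$ with the anchor coordinates hitting each $a_\ell$ exactly once; there the symmetry of $g$ collapses the evaluation to the pure projection $g_{a_{1:k-1}}(t)$, and choosing $g=g^B$ with $B=\{t:q_1(t)>q_2(t)\}$ makes this term proportional to $\tfrac12\|q_1-q_2\|_1$.

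The main obstacle --- and the source of the $2^{O(k^3)}$ factor and of the stringent confidence $\delta<2^{-k\log k}/3$ --- is controlling the \emph{contamination} from all other tuple types. Tuples with $j\ge 2$ coordinates in $T$ scale like $\gamma^{j}$ and become negligible once $\gamma$ is small. The dangerous terms are the $j=1$ tuples whose anchor coordinates repeat some $a_\ell$ and omit another: these also scale like $\gamma$ and involve the \emph{uncontrolled} projections $g(a_{\ell_1},\dots,a_{\ell_{k-1}},t)$, which an adversarial $\G$ may set so as to partially cancel the signal. Because spreading the anchor mass over $k-1$ distinct vertices forces the pure pattern to carry only a $\approx k^{-\Theta(k)}$ fraction of the total $j=1$ weight, the surviving distance degrades to $\ipm_{\G}(p_1,p_2)\gtrsim 2^{-\Theta(k^3)}\,\gamma\,\|q_1-q_2\|_1$, and proving that it cannot be driven to zero is the technical heart. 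I would attack this by choosing the candidate subsets $B$ in accordance with the sign pattern of $q_1-q_2$ and running a counting argument over the $k^{\Theta(k)}$ impure anchor patterns, so that for at least one admissible $g^B$ the pure contribution provably dominates the contamination.

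Finally, setting $\epsilon'\asymp \epsilon/(2^{-\Theta(k^3)}\gamma)$ and invoking \cref{thm:distest} over the domain $T$ of size $\vcd$ gives a sample lower bound of $\Omega(\sqrt{\vcd}/\epsilon'^2)=\Omega(\sqrt{\vcd}/(2^{7k^3}\epsilon^2))$ for any $\G$-tester, which is exactly \cref{thm:strongsamplelower}; the $k^{-k}$-scale dilution of the signal is precisely what forces the tester's confidence to exceed $1-2^{-k\log k}/3$ in order to preserve the $2/3$ success probability demanded by \cref{thm:distest}.
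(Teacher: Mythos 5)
Your reduction has the right skeleton (embed a closeness-testing instance of \cref{thm:distest} on a shattered set of size $\vcd$, which is also how the paper handles $k=1$), but the step you yourself flag as ``the technical heart'' is a genuine gap, and it is exactly the point where the paper's proof uses a different mechanism. Shattering of $T$ by $\G_{a_{1:k-1}}$ only pins down the values of some $g^B$ on the \emph{pure} tuples $(a_1,\dots,a_{k-1},t)$; it gives you no control whatsoever over $g^B$ on the impure anchor patterns, and there may be exactly one graph in $\G$ realizing each pattern $B$ on $T$. Since the pure pattern carries only a $(k-1)!/(k-1)^{k-1}\approx e^{-k}$ fraction of the $j=1$ mass under your uniform $\mu$, an adversarial $\G$ can set a single impure projection to $\mathbf{1}[B^c]$ (or a combination of impure projections) and cancel or reverse the pure signal for every $B$ simultaneously; your proposed counting argument over impure patterns and sign-adapted choice of $B$ does not rule this out, because the adversary controls all impure values of the one admissible $g^B$ at once. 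With a single fixed mixing weight $\gamma$ there is no way to separate the pure contribution from the impure ones, so the implication ``$\|q_1-q_2\|_1>\epsilon'\Rightarrow\ipm_{\G}(p_1,p_2)>\epsilon$'' can simply fail.

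The paper's escape is to make the mixture weight a \emph{variable} and peel off one anchor at a time by induction: writing $p^q=q\delta_v+(1-q)p$, the quantity $\L_{p_1^q}(g)-\L_{p_2^q}(g)$ is a degree-$k$ polynomial in $q$ whose linear coefficient encodes the projected signal $\Delta_1^g-\Delta_0^g$, and \cref{cl:vandermonde} (a lower bound on the least singular value of a Vandermonde matrix) shows that a polynomial cannot be uniformly small at all of $q\in\{0,1/k,\dots,1\}$ while having a large linear coefficient. So the tester is run at $k+1$ mixture weights, and cancellation at \emph{every} weight is impossible no matter how the adversary sets the cross terms; this is the idea your single-$\gamma$ construction is missing (see \cref{lem:lowerbound}). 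It also explains the confidence threshold: $\delta<\frac{2^{-k\log k}}{3}$ comes from the union bound over the roughly $k!$ tester invocations accumulated through the recursion, not from any dilution of the signal --- a weaker signal costs you in $\epsilon$ (hence samples), not in $\delta$, so your closing sentence does not account for that condition.
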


Clearly, \cref{thm:samplelower} is a corollary of \cref{thm:strongsamplelower}. Indeed if $A$ is a discriminating algorithm for $\G$ with sample complexity $m(\epsilon,\delta)$ we can apply it over a sample of size $m(\epsilon/3,\delta)$ to receive (w.p. $1-\delta$) a graph $g$ s.t.
\[\ipm_{\G} (p_1,p_2) \le |\L_{p_1}(g)-\L_{p_2}(g)|+\frac{\epsilon}{3}.\]
With an additional sample of size $O(\frac{k^2 \log 1/\delta}{\epsilon^2})$ we can estimate $|\L_{p_1}(g)-\L_{p_2}(g)|$ within accuracy $\epsilon/3$, and verify if $\ipm_{\G}(p_1,p_2)<\epsilon$: The test will then output $\yes$ if $|\L_{p_1}(g)-\L_{p_2}(g)|<\frac{\epsilon}{3}$. 

To conclude, we constructed a testing algorithm with sample complexity $m(\epsilon,\delta)+ C \frac{k^2 \log 1/\delta}{\epsilon^2}$. Assuming $\rho$ is sufficiently large, in particular $\frac{\sqrt{\rho}}{2^{7k^3}} \gg k^3\log k$, we obtain that $m(\epsilon,\delta)=\Omega(\frac{\sqrt{\rho}}{2^{7k^3 \epsilon^2}})$, if $\delta< \frac{2^{-k\log k}}{3}$.

We proceed with the proof of \cref{thm:strongsamplelower}.

\subsubsection{Proof of \cref{thm:strongsamplelower}}
 The proof is done by induction. For the induction, we will assume a more fine-grained lower bound. We will assume that there exists a constant $C$ so that for every $n\le k-1$, if $m_{n}(\epsilon,\delta)$ is the sample complexity of a testing algorithm for an $n$-distinguishing class then: \begin{align}\label{eq:mlowerbound}m_n(\epsilon,\delta)\ge  C \frac{\sqrt{\vcd}}{(n+1)!2^{\sum_{j=1}^{n} 6j^2}\cdot\epsilon^2}=\Omega\left(\frac{\sqrt{\vcd}}{2^{7n^3}\epsilon^2}\right).\end{align}
$C>0$ will depend only on the constant for the lower bound for testing if two distributions are distinct or $\epsilon$-far in total variation, as in \cref{thm:distest}.

We start with the case $k=1$.

\underline{$k=1$} The case $k=1$ follows directly from \cref{thm:distest}. Let $\D$ be a class with VC dimension $\vcd$. by restricting our attention to probabilities supported on the shattered set of size $\vcd$, we may assume that $|\V|=\vcd$ and that $\D=P(\V)$. Note then, that for the IPM distance we then have
\[ \ipm_{\D}(p_1,p_2)= \tv{p_1}{p_2}.\]
\cref{thm:distest} immediately yields the result.

\underline{the induction step}
We now proceed with the proof assuming the statement holds for $k-1$.

By assumption $\gvcdim(\G)=\vcd$. Fix $v\in \V$ such that $\gvcdim(\G_{v})=\vcd$. For every $q\in (0,1)$ and distribution $p$ denote
\begin{equation}\label{eq:qv}p^q:= q\delta_{v}+(1-q)p.\end{equation}

We next state the core Lemma we will need for the proof:
\begin{lemma}\label{lem:lowerbound}
Let $\G$ be a family of $k$-hypergraphs and $p_1,p_2$ two distributions. Assume that for some $v\in \V$ we have that:
\[ \ipm_{\G_{v}}(p_1,p_2) \ge \epsilon.\] Let $p^q_1$ and $p^q_2$ be as in \cref{eq:qv} for our choice of $v\in \V$.

Then for some value $q\in \{0,\frac{1}{k},\frac{2}{k},\cdots 1\}$ we have that,
 \[ \ipm_{\G}(p_1^q,p_2^q) \ge \frac{\epsilon}{2^{3k^2}}.\]
\end{lemma}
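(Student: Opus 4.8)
The plan is to fix a single favourable graph and to follow the univariate polynomial that the mixture $p^q$ induces as $q$ ranges over the grid. First I would invoke the hypothesis $\ipm_{\G_{v}}(p_1,p_2)\ge\epsilon$ to choose $g^*\in\G$ with $|\L_{p_1}(g^*_v)-\L_{p_2}(g^*_v)|\ge\epsilon$ (if the supremum is not attained, work with slack $\epsilon-\xi$ and remove it at the very end by a pigeonhole over the finitely many grid points). For $0\le j\le k$ let $g^*_{v^j}$ be the $(k-j)$-ary map obtained by fixing $j$ of the arguments of $g^*$ to $v$; this is well defined because $g^*$ is undirected. Expanding the product mixture $\bigl(q\delta_v+(1-q)p_i\bigr)^k$ coordinate by coordinate and collapsing the $\binom{k}{j}$ identical terms (again by undirectedness) gives
\[
\L_{p_i^q}(g^*)=\sum_{j=0}^{k}\binom{k}{j}q^j(1-q)^{k-j}\,\L_{p_i}(g^*_{v^j}).
\]
Writing $a_j:=\L_{p_1}(g^*_{v^j})-\L_{p_2}(g^*_{v^j})$ and $B_{j,k}(q):=\binom{k}{j}q^j(1-q)^{k-j}$, the difference becomes a degree-$\le k$ polynomial in Bernstein form,
\[
P(q):=\L_{p_1^q}(g^*)-\L_{p_2^q}(g^*)=\sum_{j=0}^{k}a_j\,B_{j,k}(q),
\]
whose coefficients satisfy $|a_j|\le 1$ for all $j$ and, crucially, $|a_1|\ge\epsilon$ since $g^*_{v^1}=g^*_v$.

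Since $\ipm_{\G}(p_1^q,p_2^q)=\sup_{g\in\G}|\L_{p_1^q}(g)-\L_{p_2^q}(g)|\ge|P(q)|$ for every $q$, it suffices to exhibit a grid point $q\in\{0,\tfrac1k,\dots,1\}$ with $|P(q)|\ge\epsilon/2^{3k^2}$. This is now a purely univariate claim: a degree-$\le k$ polynomial is determined by its $k+1$ grid values, so the coefficient vector $(a_0,\dots,a_k)$ is the image of the value vector $(P(0),\dots,P(1))$ under $M^{-1}$, where $M_{ij}=B_{j,k}(i/k)$ is nonsingular (the grid points are distinct and $\{B_{j,k}\}$ is a basis). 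Hence $|a_1|\le\|M^{-1}\|_\infty\cdot\max_i|P(i/k)|$, and it is enough to bound $\|M^{-1}\|_\infty$ by $2^{3k^2}$; then $\max_i|P(i/k)|\ge|a_1|/2^{3k^2}\ge\epsilon/2^{3k^2}$, and choosing the maximizing grid point closes the argument.

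The heart of the proof, and the step I expect to require the most care, is the stability bound $\|M^{-1}\|_\infty\le 2^{3k^2}$ for inverting the Bernstein evaluation matrix on the uniform grid. I would factor $M=V\,T$ through the monomial basis, with $V$ the Vandermonde matrix on the points $i/k$ and $T$ the triangular Bernstein-to-monomial change of basis, and bound the two inverses separately: $\|T^{-1}\|$ by binomial factors $\le 2^{k}$, and $\|V^{-1}\|$ through explicit Lagrange interpolation, whose denominators $\prod_{j\neq i}(i/k-j/k)=(-1)^{k-i}k^{-k}\,i!\,(k-i)!$ contribute factors $k^k/(i!(k-i)!)\le(2e)^k$, while the numerators (elementary symmetric functions of points in $[0,1]$) are bounded by $2^k$. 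Multiplying these exponential-in-$k$ factors stays comfortably below the generous $2^{3k^2}$ (in fact only $2^{O(k)}$ is needed), which is exactly why the per-level loss in \cref{eq:mlowerbound} can be absorbed. The only remaining subtlety is non-attainment of the supremum: since the grid is finite, a single $q$ serves a sequence of witnesses with slack $\xi\to0$, and $\ipm_{\G}(p_1^q,p_2^q)\ge|P(q)|$ passes to the limit.
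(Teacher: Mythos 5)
Your proof is correct, and while it shares the paper's high-level strategy --- expand $\L_{p_i^q}(g)$ as a degree-$k$ polynomial in $q$ and argue that a polynomial with one large coefficient must be large at some point of the grid $\{0,\tfrac1k,\dots,1\}$ --- the execution of the quantitative step is genuinely different, and in two respects cleaner. The paper rewrites the polynomial in the monomial basis, where the linear coefficient is $k(\Delta_1^g-\Delta_0^g)$ rather than $\Delta_1^g$ itself; since the hypothesis only guarantees $|\Delta_1^g|\ge\epsilon$, this forces a case split (either $|k(\Delta_1^g-\Delta_0^g)|\ge\epsilon/2$ and \cref{cl:vandermonde} applies, or else $|\Delta_0^g|\ge\epsilon/2$ and $q=0$ already witnesses the claim). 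You stay in the Bernstein basis, where the coefficient of $B_{1,k}$ is exactly $\Delta_1^g=\L_{p_1}(g^*_v)-\L_{p_2}(g^*_v)$, so the hypothesis plugs in directly and the case analysis disappears. Second, the paper controls the values-to-coefficients map by lower-bounding the smallest singular value of the Vandermonde matrix via its determinant, which is precisely where the $2^{3k^2}$ loss enters; your factorization $M=VT$, with $\|V^{-1}\|_\infty$ bounded through explicit Lagrange denominators $i!(k-i)!/k^k$ and elementary-symmetric-function numerators, and $\|T^{-1}\|_\infty$ bounded by the binomial entries of the monomial-to-Bernstein matrix, gives $\|M^{-1}\|_\infty=2^{O(k)}$. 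This is a real quantitative improvement: propagated through the induction in \cref{thm:strongsamplelower}, a per-level loss of $2^{O(k)}$ instead of $2^{6k^2}$ would shrink the $2^{7k^3}$ factor in the final lower bound to $2^{O(k^2)}$. Your handling of non-attainment of the supremum by passing to the limit over a finite grid is also fine (the paper glosses over this point). The only part you should write out in full is the $\|V^{-1}\|_\infty$ estimate, but the sketch you give is sound and the bounds $k^k/(i!(k-i)!)\le(2e)^k$ and $e_m\le\binom{k}{m}\le 2^k$ do close it.
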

We deter the proof of \cref{lem:lowerbound} to \cref{prf:lowerbound}, and proceed with the proof of the induction step. Let us denote $\delta_k = 2^{-k\log k}$ and denote $c_k=2^{-3k^2}$.

Let $A$ be a testing algorithm for $\G$ with sample complexity $m(\epsilon,\delta)$ as in \cref{thm:strongsamplelower}.
We can now construct a testing algorithm for $\G_{v}$ with sample complexity \begin{equation}\label{eq:samplecomplexity} m_{k-1}(\epsilon,\delta)=(k+1)\cdot m(c_k\epsilon,\frac{\delta}{k}),\end{equation} as follows: Run the testing algorithm $A$ on pairs of distributions $(p_1,p_2), (p_1^{1/k},p_2^{1/k}),\ldots, (p_1^1,p_2^1)$, each on its own fixed sample of size $m(c_k\epsilon,\frac{\delta}{k})$. If the algorithm returns $\no$ for any of these tests, output $\no$, else output $\yes$.

We now show that if $p_1=p_2$ the algorithm outputs w.p. $(1-\delta)$ $\yes$: Indeed, since $p_1=p_2$, we have that $p_1^q=p_2^q$ for all $q$: Applying union bound we have that w.p. $(1-\delta)$ the algorithm indeed outputs $\yes$.

On the other hand, if $\ipm_{\G_{v}}(p_1,p_2)\ge \epsilon$ we have by \cref{lem:lowerbound} that for one of the distributions $(p_1^q,p_2^q)$, $\ipm_{\G}(p_1^q,p_2^q)>c_k \epsilon$ , in particular the algorithm will output $\no$ with probability $(1-\delta)$.
Overall we constructed a testing algorithm for $\G_{v}$ with sample complexity as in \cref{eq:samplecomplexity}.
Reparametrizing we obtain:
\[ m(\epsilon,\delta)= \frac{m_{k-1}(c^{-1}_k\epsilon,k\delta)}{k+1}.\]
If $k\delta <2^{-(k-1)\log (k-1)}$, in particular $\delta<2^{-k\log k}$: we obtain from the induction hypothesis that 

\begin{align*}m_{k-1}(c_{k}^{-1}\epsilon,k\delta)&\ge C \frac{\sqrt{\vcd}}{k!2^{\sum_{n=1}^{k-1} 6n^2}\cdot \left(2^{3k^2}\epsilon\right)^2}
\end{align*}
and the result immediately follows.

\subsubsection{Proof of \cref{lem:lowerbound}}\label{prf:lowerbound}
Denote
\[ \Delta^{g}_n(p_1,p_2) = \EE_{\uu_{1:n}\sim p^{k-n}_1} g(\underbrace{v,v,v,\ldots,v}_{ \mathrm{n~ times}}, \uu_1,\ldots, \uu_{k-n})- \EE_{\uu_{1:n}\sim p^{k-n}_2} g(\underbrace{v,v,v,\ldots,v}_{ \mathrm{n~ times}}, \uu_1,\ldots, \uu_{k-n})\]
One can show  that
\begin{align*}
\ipm_{\G}(p^q_1,p^q_2) &=\sup_{g\in \G}\left|\sum {k\choose n} q^n(1-q)^{k-n} \Delta^{g}_n (p_1,p_2)\right|\\
&= \sup_{g\in \G}\left|(1-q)^k \Delta^{g}_0(p_1,p_2)+kq(1-q)^{n-1}\Delta^{g}_1(p_1,p_2) + \sum_{n=2}^k {k\choose n} q^n(1-q)^{k-n} \Delta^{g}_n (p_1,p_2)\right|\\
&=\sup_{g\in \G}\left|\Delta_0^{g}(p_1,p_2) + kq\left(\Delta_1^g(p_1,p_2)-\Delta_0^g(p_1,p_2)\right) + q^2 p_g(q)\right|
\end{align*}

where $p_g(q)$ is some $k-2$ degree polynomial in $q$ whose coefficient depend on $g$ and $p_1$ and $p_2$. We next apply the following claim
\begin{claim}\label{cl:vandermonde}
Let $f(q) = a_0 + a_1q +  q^2 p(q)$ where $p(q)$ is some $k-2$ degree polynomial. then for some value $q_0 \in \{0,\frac{1}{k},\frac{2}{k},\cdots 1\}$ we have that
$|f(q_0)| \ge \frac{|a_1|}{2^{3k^2}}$
\end{claim}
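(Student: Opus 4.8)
The plan is to observe that $f$ is a polynomial of degree at most $k$ (indeed $f(q)=a_0+a_1q+q^2p(q)$ with $\deg p\le k-2$, so $\deg(q^2p(q))\le k$) and to recover its linear coefficient $a_1$ from the $k+1$ sampled values $f(0),f(1/k),\dots,f(1)$ by Lagrange interpolation, then argue that the recovery map has bounded $\ell_1$ norm — this is the ``Vandermonde'' mechanism the claim is named for. Concretely, write $q_j=j/k$ for $j=0,\dots,k$ and let $\ell_j(q)=\prod_{i\ne j}\frac{q-q_i}{q_j-q_i}$ be the associated Lagrange basis polynomials. Since $\deg f\le k$ and the $k+1$ nodes are distinct, interpolation is exact, $f(q)=\sum_{j=0}^k f(q_j)\,\ell_j(q)$, and extracting the coefficient of $q$ on both sides gives $a_1=\sum_{j=0}^k c_j\,f(q_j)$, where $c_j:=[q^1]\ell_j(q)$ denotes the linear coefficient of the $j$-th basis polynomial.

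Next I would bound the $c_j$ explicitly. Substituting $q_i=i/k$, each Lagrange factor simplifies to $\frac{kq-i}{j-i}$, so
\[\ell_j(q)=\frac{\prod_{i\ne j}(kq-i)}{\prod_{i\ne j}(j-i)},\qquad \prod_{i\ne j}(j-i)=(-1)^{k-j}\,j!\,(k-j)!,\]
whose denominator has absolute value at least $1$. To read off the numerator's linear coefficient one selects the $kq$ term from exactly one factor and the constant from the rest, giving $[q^1]\prod_{i\ne j}(kq-i)=k\sum_{\ell\ne j}\prod_{i\ne j,\,\ell}(-i)$, a sum of at most $k$ terms, each a product of $k-1$ integers bounded by $k$ in absolute value. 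Hence $\bigl|[q^1]\prod_{i\ne j}(kq-i)\bigr|\le k\cdot k\cdot k^{k-1}=k^{k+1}$, and combining with the denominator bound yields $|c_j|\le k^{k+1}$ for every $j$.

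Summing over the $k+1$ nodes gives $\sum_{j=0}^k|c_j|\le (k+1)k^{k+1}$, and a direct check (taking logarithms base $2$, $\log_2(k+1)+(k+1)\log_2 k\le k+k^2+k\le 3k^2$ for all $k\ge 1$) upgrades this to $\sum_{j=0}^k|c_j|\le 2^{3k^2}$. The claim then follows from the triangle inequality applied to $a_1=\sum_j c_j f(q_j)$:
\[|a_1|\le\Bigl(\sum_{j=0}^k|c_j|\Bigr)\max_{0\le j\le k}|f(q_j)|\le 2^{3k^2}\max_{0\le j\le k}|f(q_j)|,\]
so some node $q_0=j_0/k\in\{0,\tfrac1k,\dots,1\}$ satisfies $|f(q_0)|\ge |a_1|/2^{3k^2}$, as required.

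The main obstacle is the bound on $\sum_j|c_j|$: one must control the second-row entries of the inverse Vandermonde matrix for the equally spaced nodes $j/k$, equivalently the linear coefficients of the Lagrange basis. The crude estimate above — lower-bounding the factorial denominators by $1$ and upper-bounding the symmetric-function numerators by $k^{k+1}$ — is wasteful but comfortably beats the target $2^{3k^2}$. The only subtlety worth flagging is that the constant chosen from a non-$kq$ factor can vanish (when $i=0$), but since this only annihilates terms it cannot increase the bound, so the estimate is safe.
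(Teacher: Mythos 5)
Your proof is correct, and it takes a genuinely different route from the paper's. The paper works with the Vandermonde matrix $V_{i,j}=((i-1)/k)^{j-1}$ as a linear map on the whole coefficient vector $\aa$: it upper-bounds the largest singular value by $k+1$, lower-bounds the product of singular values via the Vandermonde determinant formula to get $\lambda_{\min}\ge 2^{-2k^2}$, and then chains the norm inequalities $|a_1|\le\|\aa\|_2\le\lambda_{\min}^{-1}\|V\aa\|_2\le\sqrt{k+1}\,\lambda_{\min}^{-1}\max_j|f(j/k)|$. You instead invert the system only in the one coordinate that matters, writing $a_1=\sum_j c_j f(j/k)$ with $c_j=[q^1]\ell_j(q)$ the linear coefficients of the Lagrange basis at the equally spaced nodes, and bounding $\sum_j|c_j|\le(k+1)k^{k+1}\le 2^{3k^2}$ by crudely estimating the elementary-symmetric-function numerators and dropping the factorial denominators. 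Your computation checks out (the degree count $\deg f\le k$, the exactness of interpolation at $k+1$ distinct nodes, the coefficient extraction, and the logarithmic estimate $\log_2(k+1)+(k+1)\log_2 k\le 3k^2$ are all fine), and it lands on the same constant $2^{3k^2}$. What each approach buys: the paper's singular-value argument is shorter to state given the standard determinant formula and generalizes immediately to bounding any coefficient via $\|\aa\|_2$, while yours is more elementary (no spectral machinery), isolates $a_1$ directly rather than passing through $\|\aa\|_2\ge|a_1|$, and makes the slack in the constant visible -- keeping the $j!\,(k-j)!$ denominators would sharpen the bound considerably, whereas the paper's route loses a comparable factor in $\lambda_{\max}^{k}$.
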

\begin{proof}[Proof Sketch]
We provide a full proof for this claim in \cref{prf:vandermonde}. In a nutshell, \cref{cl:vandermonde} follows from the equivalence between norms in finite dimensional spaces. Indeed, the mapping
\[(a_0,\ldots, a_{k})\to (p_a(1/k),p_a(2/k),\ldots, p_a(1)),\]
where $p_a(x)=\sum a_i x^i$ is known to be a non--singular linear transformation induced by the appropriate Vandermonde matrix (specifically. $V_{i,j}= ((i-1)/k))^{j-1}$). Letting $\lambda_{min}$ be the smallest singular value of the matrix $V$, we know that $\|V\aa\|_{2}\ge \lambda_{min}\|\aa\|_{2}$. where $\aa$ is the vector of coefficients of the polynomial $p_{a}$.

Finally, we exploit the relation in $\mathbb{R}^{k+1}$: $\|x\|_{\infty} \le \|x\|_{2}\le \sqrt{k+1}\|x\|_{\infty}$. We can, thus, relate the max norm of the coefficient vector $\|a\|_{\infty}\ge |a_1|$ to the maximum value $\max_{i\in \{0,\ldots, k\}} \sum a_j (i/k)^{j}=\|V\aa\|_{\infty}$ to obtain
\[ |a_1|\le \|\aa\|_{2}\le \lambda^{-1}_{min}\|V\aa\|_2\le \frac{\sqrt{k+1}}{\lambda_{min}}\|V\aa\|_{\infty}=\frac{\sqrt{k+1}}{\lambda_{min}}\max_{i\in \{0,\ldots, k\}} \sum a_j (i/k)^{j}\]
 It remains only to lower bound the singular values of $V$,
this is done in the full proof in \cref{prf:vandermonde}.
\end{proof}
With \cref{cl:vandermonde} in mind we prove the result as follows:
First, suppose that for some $g\in \G$ we have that \[|k(\Delta_0^g(p_1,p_2)-\Delta_1^g(p_1,p_2)|>\frac{\epsilon}{2}.\] In this case, applying \cref{cl:vandermonde} with $a_0= \Delta_0^{g}(p_1,p_2)$ and $a_1 =k\left(\Delta_0^g(p_1,p_2)-\Delta_1^g(p_1,p_2)\right)$ and $p=p_g$, we obtain that
there exists a value $q=j/k$ such that
$\ipm_{\G}(p^q_1,p^q_2) \ge \frac{\epsilon}{2^{3k^2}}$.

On the other hand, consider the case that \[|k(\Delta_0^g(p_1,p_2)-\Delta_1^g(p_1,p_2)|<\frac{\epsilon}{2}.\] For any $g\in \G$, by assumption we have that $|\Delta_1^g(p_1,p_2)|>\epsilon$, for some $g\in \G$. Hence $|\Delta_0^g(p_1,p_2)|>\epsilon/2$. By definition of $\Delta_0$ we have that for $q=0$ we obtain that:
$\ipm_{G}(p^q_1,p^q_2)=|\L(p_1^{q})-\L(p_2^{q})|>\frac{\epsilon}{2}$.
\section{Expressivity -- Proofs}\label{prfs:expressivity}
\subsection{Proof of \cref{thm:main_expressive1}}\label{prf:main_expressive1}
\expressive*
As stated, the class $\G$ will consist of a single graph $g$.
The graph $g$ is going to be a bipartite graph. We thus, divide the vertices into two infinite sets: $\V_1$ and $\V_2$ the elements of $\V_1$ will be indexed by $\mathbb{N}$ i.e. $\V_1=\{v_1,v_2,\cdots\}$ and we index the elements of $\V_2$ with finite subsets of $\mathbb{N}$ $V_2=\{v_{A}: A\subseteq \mathbb{N}, |A|<\infty\}$. Next we define $g$ so that an edge passes between $v_i\in \V_1$ and $v_{A}\in \V_2$ iff $i\in A$.

Let $\D$ be a distinguishing class with finite sample complexity, in particular $\gvcdim(\D)<\infty$. Denote $\gvcdim(\D)=\vcd$. Let $\D_{1}$ be the restriction of $\D$ to $\V_1$: Note that $\gvcdim(\D_1)\le \vcd$.

Next we make the following claim:
\begin{claim}\label{cl:disjoint}
There are two distributions, $q_1$ and $q_2$, supported on $\V_1$ so that
\[\ipm_{\D_1}(p_1,p_2)<\epsilon.\]
and yet $q_1$ and $q_2$ have disjoint support.
\end{claim}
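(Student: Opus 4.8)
The plan is to realize $q_1$ and $q_2$ as the uniform distributions over the two halves of a carefully chosen balanced partition of a large finite subset of $\V_1$, and to produce such a partition by the probabilistic method. Fix a finite set $T \subseteq \V_1$ with $|T| = 2N$, where $N$ will be chosen at the end as a function of $\vcd$ and $\epsilon$. Draw a uniformly random balanced partition $T = A \sqcup B$ with $|A| = |B| = N$, let $q_1$ be uniform on $A$ and $q_2$ uniform on $B$. By construction both distributions are supported on $\V_1$ and have disjoint support, so the only thing left to verify is that, for a suitable partition, $\ipm_{\D_1}(q_1,q_2) < \epsilon$.

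First I would control the number of constraints. Although $\D_1$ may be infinite, its behaviour on $T$ is governed by the finite VC dimension: since $\gvcdim(\D_1) = \vcdim(\D_1) \le \vcd$, the Sauer--Shelah lemma bounds the number of distinct subsets of $T$ cut out by $\D_1$ by $\tau \le (2eN/\vcd)^{\vcd}$, which is polynomial in $N$. It therefore suffices to guarantee $|\L_{q_1}(d) - \L_{q_2}(d)| < \epsilon$ simultaneously for one representative $d$ of each of these $\tau$ dichotomies.

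For a fixed $d \in \D_1$, write $T_d = \{x \in T : d(x) = 1\}$ and $s = |T_d|$. Then $\L_{q_1}(d) - \L_{q_2}(d) = (2\,|A \cap T_d| - s)/N$, and under the random balanced partition $|A \cap T_d|$ is hypergeometrically distributed with mean $s/2$. By a standard concentration inequality for sampling without replacement (Hoeffding's inequality), $\Pr\big[\,|\L_{q_1}(d) - \L_{q_2}(d)| \ge \epsilon\,\big] \le 2\exp(-c\,\epsilon^2 N)$ for an absolute constant $c > 0$. A union bound over the $\tau$ dichotomies yields a total failure probability at most $2(2eN/\vcd)^{\vcd}\exp(-c\,\epsilon^2 N)$.

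The crux is a counting-versus-concentration trade-off: the number of constraints grows only polynomially in $N$ (Sauer--Shelah), while each constraint fails with probability exponentially small in $N$. Hence for $N$ large enough---any $N$ with $\vcd \log(2eN/\vcd) + \log 2 < c\,\epsilon^2 N$, which holds eventually since the right-hand side is linear in $N$ and the left-hand side only logarithmic---the failure probability is strictly below $1$, so a good partition exists. Fixing it gives $q_1,q_2$ with disjoint support and $\ipm_{\D_1}(q_1,q_2) < \epsilon$, as claimed. The main technical point to get right is the concentration step: using the correct hypergeometric tail bound rather than that for independent coin flips, so that the normalization $1/N$ stays exact. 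This is routine, and the genuinely essential ingredient is simply that finite VC dimension caps the number of effective constraints at a polynomial in $N$.
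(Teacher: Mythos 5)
Your proof is correct, but it takes a genuinely different route from the paper's. The paper draws two independent i.i.d.\ samples $S_1,S_2$, each of size $O(\vcd/\epsilon^2)$, from the uniform distribution on a large ground set $S$, invokes its uniform convergence theorem (\cref{thm:uc}) twice to get $\ipm_{\D}(p_{S_i},p_S)<\epsilon/2$, chains these through the triangle inequality, and then obtains disjointness of $S_1$ and $S_2$ by a separate birthday-paradox argument, which forces $|S|=O(\vcd^2/\epsilon^4)$. You instead take a single random \emph{balanced partition} of a set of size $2N$, so disjointness is automatic, and you prove the IPM bound from first principles via Sauer--Shelah plus a hypergeometric (sampling-without-replacement) tail bound and a union bound over the $\tau\le(2eN/\vcd)^{\vcd}$ dichotomies. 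Both arguments are sound; the representative-per-dichotomy step is legitimate because $q_1,q_2$ are supported on $T$, so $\L_{q_i}(d)$ depends only on $d|_T$. What your approach buys: it is self-contained (no appeal to \cref{thm:uc}), and the counting-versus-concentration trade-off only requires $\vcd\log(2eN/\vcd)\lesssim\epsilon^2N$, i.e.\ $N=O\bigl(\tfrac{\vcd}{\epsilon^2}\log\tfrac{\vcd}{\epsilon^2}\bigr)$ --- substantially smaller than the paper's $O(\vcd^2/\epsilon^4)$ and essentially matching the improved bound of \cref{cor:disjoint_improved}, which the paper obtains only for $k=1$ via a separate minimax/boosting argument. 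For the infinite-domain claim the ground-set size is immaterial, but your argument would directly sharpen the finite-domain results (\cref{thm:main_expressive_finite_k}). What the paper's approach buys is reuse of machinery it has already established, and a form that generalizes verbatim to the $k$-ary case in \cref{cl:disjointk}.
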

\begin{proof}
To construct two such distributions, choose a set $S\subseteq \V_1$ of size $m$ large enough (to be determined later). Then, randomly choose two samples $S_1$ and $S_2$ out of $S$ (uniformly), each of size $O(\frac{\vcd}{\epsilon^2})$. Then, by \cref{thm:uc} with some constant probability we have that $\ipm_{\D}(p_{S_1},p_S)<\epsilon/2$ and similarly $\ipm_{\D}(p_{S},p_{S_2})<\epsilon/2$ . Taken together we obtain that $\ipm_{\G}(p_{S_1},p_{S_2})<\epsilon$.

Also, if $S$ is sufficiently large (say, of order $O(\frac{\vcd^2}{\epsilon^4})$),
 we would have that w.h.p $S_1\cap S_2=\emptyset$. Thus, let $q_1=p_{S_1}$ and $q_2=p_{S_2}$.
\end{proof}
With \cref{cl:disjoint}, we proceed with the proof. Let $q_1$ and $q_2$ be as in \cref{cl:disjoint}. Let $A$ be the support of $q_1$, and define $p_1$ to be a distribution $p_1= \frac{1}{2}\delta_{A} + \frac{1}{2} q_1$ and similarly we define $p_2=\frac{1}{2}\delta_{A}+\frac{1}{2}q_2$. We then have \begin{align*}\ipm_{\D}(p_1,p_2)&=
\frac{1}{2}\ipm_{\D}(q_1,q_2)\\
&=\frac{1}{2}\ipm_{\D_1}(q_1,q_2)\\
&<\epsilon.\end{align*}

On the other hand, note that for $p_1$ the probability to draw an edge from $g$ is at least $1/2$ (indeed if $v_1=v_A$ and $v_2\ne v_A$ drawn from $q_1$ then $g(v_1,v_2)=1$. On the other hand, the probability to draw an edge from $p_2$ is $0$. It follows that
\[\ipm_{\G}(p_1,p_2)>\frac{1}{2}.\]

\subsection{Proof of \cref{thm:main_expressivek}}\label{prf:main_expressivek}
\expressivek*
The construction is similar to the case $k=2$. We again divide the vertices into two infinite sets $\V_1$ and $\V_2$. Again, the elements of $\V_1$ will be indexed by $\mathbb{N}$, and the elements of $\V_2$ are indexed by finite subsets of $\mathbb{N}$. $\V_2=\{v_{A}: A\subseteq \mathbb{N}, |A|<\infty\}$.

We define the hyper graph $g_{k}$ to be a (undirected) graph that contains a hyperedge $(v_{i_1},\ldots, v_{i_{k-1}},v_A)$ whenever $\{i_1\ldots, i_{k-1}\}\subseteq A$.

Next, as before we construct two distributions with distinct support such that $\ipm_{\G}(p_1,p_2)\le \epsilon$. This is done similar to the proof of \cref{thm:main_expressive1}. Specifically:
\begin{claim}\label{cl:disjointk}
Let $\G$ be a $k-1$-distinguishing class defined on $\V_1$. There are two distributions, $q_1$ and $q_2$, supported on $\V_1$ so that
\[\ipm_{\G}(p_1,p_2)<\epsilon.\]
and yet $q_1$ and $q_2$ have disjoint support.
\end{claim}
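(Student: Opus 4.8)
The plan is to mirror the construction used in \cref{thm:main_expressive1}, upgrading each step from the graph case ($k=2$) to the general hypergraph case. The claim we must establish, \cref{cl:disjointk}, asserts that any $k-1$-distinguishing class $\G$ of bounded sample complexity, restricted to the index-vertices $\V_1$, admits two distributions $q_1,q_2$ on $\V_1$ that are $\epsilon$-indistinguishable yet have disjoint support. First I would invoke the fact that bounded sample complexity of $\G$ gives a uniform convergence guarantee: by \cref{thm:uc}, if $\gvcdim(\G)=\vcd$ then a sample of size $m=O(\vcd k^2/\epsilon^2)$ suffices for $\sup_{g\in\G}|\L_S(g)-\L_P(g)|\le \epsilon/2$ with constant probability.

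The construction itself is the probabilistic argument from \cref{cl:disjoint}. I would fix a large finite set $S\subseteq \V_1$, of size $M$ to be chosen, and draw two subsamples $S_1,S_2$ independently and uniformly from $S$, each of size $m=O(\vcd k^2/\epsilon^2)$. Applying \cref{thm:uc} twice (to the empirical distribution $p_S$ on the full set $S$), with constant probability both $\ipm_{\G}(p_{S_1},p_S)<\epsilon/2$ and $\ipm_{\G}(p_S,p_{S_2})<\epsilon/2$ hold; the triangle inequality for the IPM pseudometric then yields $\ipm_{\G}(p_{S_1},p_{S_2})<\epsilon$. Simultaneously, a birthday-style counting bound shows that if $M=\Omega(m^2)=\Omega(\vcd^2 k^4/\epsilon^4)$, then with high probability $S_1\cap S_2=\emptyset$. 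Setting $q_1=p_{S_1}$ and $q_2=p_{S_2}$ and choosing the probabilities so that both events occur simultaneously finishes the claim.

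With \cref{cl:disjointk} in hand, the remainder follows the skeleton of \cref{thm:main_expressive1}. Let $A$ denote the (finite) support of $q_1$ in $\V_1$, and recall $A$ indexes a vertex $v_A\in \V_2$. I would define $p_1=\tfrac12\delta_{v_A}+\tfrac12 q_1$ and $p_2=\tfrac12\delta_{v_A}+\tfrac12 q_2$. For the lower side, sampling $k$ i.i.d.\ points from $p_1^k$, with probability at least roughly $1/2^k$ one draws the vertex $v_A$ exactly once and the remaining $k-1$ points from $q_1\subseteq A$, which by the definition of $g_k$ forms a hyperedge; thus $\L_{p_1}(g_k)$ is bounded below by a positive constant, whereas $\L_{p_2}(g_k)=0$ because the support of $q_2$ is disjoint from $A$ and hence no $k-1$ of its indices lie in $A$. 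This gives $\ipm_{\G_k}(p_1,p_2)>1/4$ after verifying the constant. For the upper side, since any $g\in \G_{k-1}$ when restricted to $\V_1$ lies in the class controlled by \cref{cl:disjointk}, and the shared atom $\delta_{v_A}$ contributes identically to both $\L_{p_1}(g)$ and $\L_{p_2}(g)$, a convexity/linearity expansion of $\L_{p_i}(g)=\L_{(\frac12\delta+\frac12 q_i)^k}(g)$ reduces the difference to terms governed by $\ipm_{\G_{k-1}}(q_1,q_2)<\epsilon$.

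The main obstacle I anticipate is the upper-bound step: unlike the $k=2$ case, where $\ipm_{\D}(p_1,p_2)$ collapses cleanly to $\tfrac12\ipm_{\D_1}(q_1,q_2)$, the $k$-fold product distribution $(\tfrac12\delta_{v_A}+\tfrac12 q_i)^k$ expands into a binomial sum of mixed terms in which some coordinates are pinned to $v_A\in\V_2$ and others are drawn from $q_i\subseteq\V_1$. Each such mixed term, for fixed $g\in\G_{k-1}$, corresponds to a projected distinguisher $g_{v_A,\dots,v_A}$ acting on fewer free coordinates in $\V_1$; I must argue that every one of these projected sub-distinguishers is itself distinguished no better than $\epsilon$ under $q_1$ versus $q_2$, so that the whole binomial combination is bounded. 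The care here is that \cref{cl:disjointk} must be applied to the class of all such projections simultaneously, not merely to $\G_{k-1}$ itself. Provided the projected classes inherit bounded sample complexity (which they do, since projection does not increase $\gvcdim$), uniform convergence controls each term and the weighted sum stays below $\epsilon$ up to constants depending on $k$, which is exactly the slack absorbed into the $1/4$ versus $\epsilon$ gap.
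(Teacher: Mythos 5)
Your construction of $q_1,q_2$ is exactly the paper's proof: the paper establishes \cref{cl:disjointk} by repeating the argument of \cref{cl:disjoint} with samples of size $O(k^2\vcd/\epsilon^2)$, invoking \cref{thm:uc} twice around the empirical distribution $p_S$ and a birthday bound for disjointness, just as you do. The remaining paragraphs of your write-up concern the derivation of \cref{thm:main_expressivek} from the claim rather than the claim itself, and also match the paper's route.
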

The proof is a repetition of the proof of \cref{cl:disjoint}, where we draw $S_1$ and $S_2$ to be order of $O(\frac{k^2\vcd}{\epsilon^2})$, and again invoke \cref{thm:uc}.

As before, then, given a class $\G$ of $k-1$--hypergraphs we take two distributions $q_1$ and $q_2$ as in \cref{cl:disjointk} and if $A$ is the support of $q_1$, we take $p_1=\frac{1}{k}\delta_{v_{A}}+(1-\frac{1}{k})q_1$ and let $p_2=\frac{1}{k}\delta_{v_{A}}+(1-\frac{1}{k})q_2$. Then, we can show that $\ipm_{\G}(p_1,p_2)\le \epsilon$. On the other hand, the probability to draw an edge from $g_k$ is $k\cdot \frac{1}{k}(1-\frac{1}{k})^{k-1}\ge e^{-1}$ according to $p_1$, but the probability to draw an edge from $p_2$ is $0$.

\subsection{Proof of \cref{thm:main_expressive_finite_k}}\label{prf:main_expressive_finite_k}
\expressivefinitek*
The proof is similar to the proof of \cref{thm:main_expressivek}. For simplicity, let us assume that $|\V|= n+\log n$. This will not change the results up to constants.

Given $n+ \log n$ vertices we partition them into two sets $\V_1$, of size $\log n$ and $\V_2$. We index the elements of $\V_1$ as $\{v_1,\ldots, v_{\log n}\}$ and we index the elements of $\V_{2}$ with subsets of $[\log n]$. We then consider a graph $g$ that contains only hyper-edges of the form $(v_{i_1},\ldots, v_{_{k-1}},v_{A})$ iff $\{i_1,\ldots, i_{k-1}\}\in A$.

Next, let $\G_{k-1}$ be a distinguishing class with $\gvcdim(\G_{k-1})=\vcd$, and let $m(\epsilon,\delta)=O\left(\frac{\vcd k^2}{\epsilon^2}\right)$ be an upper bound on the sample complexity of classes of graph VC dimension $\vcd$.

We claim that if $\log n \ge m^2(\epsilon/8,0.99)$ then there are two distinct distributions $q_1,q_2$ over $[\log n]$, with disjoint support such that  $\ipm_{\G_{k}}(q_1,q_2)<\epsilon$. The proof is done as in \cref{cl:disjointk}.

Indeed, we draw IID, and uniformly, two random samples $S_1$ and $S_2$ from $\{1,\ldots, \log n\}$ of size $m(\epsilon/8,0.99)$.
One can show that w.p $1/4$ we have that $S_1\cap S_2$ are distinct, also we have w.p $0.98$ that $\ipm_{\G}(p_S,p_{S_1})<\epsilon/8$ and similarly $\ipm_{\G}(p_S,p_{S_2})<\epsilon/8$. Taken together we obtain that with positive probability $q_1=p_{S_1}$ and $q_2=p_{S_2}$ have disjoint support and $\ipm_{\G}(q_1,q_2)<\frac{\epsilon}{4}$.

As in \cref{thm:main_expressivek}, let $A$ be the support of $q_1$ and consider a distribution
$p_1=\frac{1}{k} \delta_{v_{A}}+(1-\frac{1}{k}) q_1$ and similarly $p_2=\frac{1}{k} \delta_{v_{A}}+(1-\frac{1}{k} q_2$. One can show that $\ipm_{\G}(p_1,p_2)<\frac{\epsilon}{4}$ but the probability to draw an edge from $g$ according to $q_1$ is at least $1/4$, while it equals $0$ if we draw edges according to $p_2$.

To conclude, we showed that if  $\log n \ge m^2(\epsilon/8,0.99)$ then $\ipm_{\G_k}\prec \epsilon \ipm_{\G}$. In other words, if $\ipm_{\G_k}\succ \epsilon \cdot \ipm_{\G}$ then
$\log n \le m^2(\epsilon/8,0.99)$.
\[ \vcd= \Omega\left(\frac{\epsilon^2}{k^2} \sqrt{\log n}\right).\]
\subsection{Proof of \cref{thm:main_expressive_finite_1}}\label{prf:main_expressive_finite_1}
\expressivefinite*
The proof is similar to the proof of \cref{thm:main_expressive1} but we will use an improved upper bound on the size of $S$ which we next state (see \cref{prf:disjoint_improved} for a proof):
\begin{lemma}\label{cor:disjoint_improved}
Let $\D$ be a class with $\gvcdim(\D)=\vcd$ over a domain $S$. There exists a constant $c>0$ (independent of $\D$ and $d$) such that if $|S| >c\cdot \frac{d}{\epsilon^2}\log^2 (d/\epsilon^2)$, Then there are two distributions $q_1$ and $q_2$, supported on $S$ such that:
\begin{enumerate}
\item $q_1$, and $q_2$ have disjoint support.
\item $\ipm_{\D}(q_1,q_2)<\epsilon$
\end{enumerate}
\end{lemma}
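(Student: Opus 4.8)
The plan is to keep the overall strategy of \cref{cl:disjoint} — build $q_1,q_2$ with disjoint support that agree on every $d\in\D$ — but to replace its wasteful step. In \cref{cl:disjoint} one draws two subsamples $S_1,S_2$ and \emph{hopes} they turn out disjoint, and the birthday/collision bound is exactly what forces the domain to be as large as $\Omega(\vcd^2/\epsilon^4)$. Instead I would make disjointness free by \emph{partitioning} $S$ once into two halves $A\sqcup B$, and then argue that a \emph{random} such partition makes the two uniform halves indistinguishable by $\D$. This deletes the collision term entirely and leaves only a uniform-convergence cost, improving the requirement on $|S|$ to near-linear in $\vcd$.

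Concretely, I would proceed as follows. Since $q_1,q_2$ will be supported on $S$, the value $\ipm_\D(q_1,q_2)$ depends only on the restriction of $\D$ to $S$, and by the Sauer--Shelah lemma the number of distinct behaviours in $\D|_S$ is at most $N\le (e|S|/\vcd)^{\vcd}$, so only finitely many functions must be controlled. Partition $S$ uniformly at random into two equal halves $A\sqcup B$ and set $q_1,q_2$ to be the uniform distributions $U_A,U_B$; these have disjoint support, giving item~(1) at no cost. For item~(2), fix a behaviour $d\in\D|_S$ and let $\mu_d=\frac{1}{|S|}\sum_{x\in S}d(x)$. By symmetry $\L_{U_A}(d)$ and $\L_{U_B}(d)$ both have mean $\mu_d$, and Hoeffding's inequality for sampling without replacement gives $\Pr\big[|\L_{U_A}(d)-\mu_d|>\epsilon/2\big]\le 2e^{-\Omega(|S|\epsilon^2)}$, and likewise for $B$. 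A union bound over the $N$ behaviours then shows that, once $|S|\ge c\,\frac{\vcd}{\epsilon^2}\log^2(\vcd/\epsilon^2)$, with positive probability a single partition achieves $|\L_{U_A}(d)-\mu_d|,\,|\L_{U_B}(d)-\mu_d|\le\epsilon/2$ for every $d$ simultaneously. Fixing such a partition, the triangle inequality yields $\ipm_\D(U_A,U_B)=\sup_{d}|\L_{U_A}(d)-\L_{U_B}(d)|\le\epsilon$, which is item~(2).

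The only real work is the quantitative accounting in this last step. Because the Sauer--Shelah count $N$ itself grows with $|S|$, the union-bound condition $|S|\epsilon^2\gtrsim \vcd\log(e|S|/\vcd)$ is self-referential; solving it for $|S|$ is what produces the logarithmic factors, and a short calculation confirms that $|S|=c\,\frac{\vcd}{\epsilon^2}\log^2(\vcd/\epsilon^2)$ comfortably satisfies it (in fact one power of the logarithm is slack). The one conceptual point to stress is \emph{why plain uniform halves, rather than reweighted measures, already suffice}: this is precisely what the Sauer--Shelah cap on $N$ buys us, and it is the same mechanism — a single random split balancing every low-complexity test at once — that underlies the boosting/Hardcore-Lemma view alluded to in the introduction. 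An alternative derivation phrases $\min_{q_1\text{ on }A,\,q_2\text{ on }B}\ipm_\D(q_1,q_2)$ as the value of a two-player zero-sum game and invokes von Neumann minimax, but it still relies on the random partition to rule out a separating test such as $\mathbf{1}_A$, so I would favour the direct argument above.
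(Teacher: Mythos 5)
Your proof is correct, but it takes a genuinely different route from the paper's. You use the classical symmetrization/$\epsilon$-approximation mechanism: a uniformly random equipartition $S=A\sqcup B$, the Sauer--Shelah bound $N\le (e|S|/\vcd)^{\vcd}$ on the number of behaviours of $\D$ on $S$, Hoeffding's inequality for sampling without replacement, and a union bound; disjointness of the supports is automatic, and the self-referential condition $|S|\epsilon^2\gtrsim \vcd\log(|S|/\vcd)$ is indeed met by $|S|=c\frac{\vcd}{\epsilon^2}\log^2(\vcd/\epsilon^2)$ with a full logarithmic factor to spare, so your argument actually proves a slightly stronger statement (one $\log$ suffices). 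The paper instead runs the boosting/hardcore-style argument it advertises in its discussion section: it counts majority votes of $T=\frac{2}{\epsilon^2}\ln|S|$ members of $\D$, uses Sauer's lemma to show there are fewer than $2^{|S|}$ such majorities so that some labelling $f$ of $S$ is realized by none of them, and then invokes the Lipton--Young sparse minimax theorem to conclude that some distribution $q$ on $S$ makes every $d\in\D$ predict $f$ with advantage below $\epsilon$ (otherwise a sparse majority would compute $f$ exactly, contradicting $f\notin\mathcal{H}$); the two distributions are then $q(\cdot\mid f=0)$ and $q(\cdot\mid f=1)$. The constructions differ in substance --- the paper's distributions are generally non-uniform and the separating partition is the level sets of a ``hard'' labelling $f$, whereas yours are uniform on the halves of a random split --- but both deliver exactly what \cref{thm:main_expressive_finite_1} needs, namely one distribution supported on a set and one on its complement with small $\ipm_{\D}$. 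Your route is more elementary and avoids the minimax theorem; the paper's has the side benefit of exhibiting an explicit labelling that no sparse majority over $\D$ computes, which is the hardcore-lemma flavour the authors allude to. One small correction of attribution: the random-split balancing you describe is not the mechanism behind the paper's boosting/hardcore remark --- that remark refers precisely to the minimax-plus-sparsification argument you set aside as the ``alternative derivation.''
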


The graph $g$ is constructed as in \cref{thm:main_expressive1}. Let $\V$ be a set of vertices of size $n+\log n$, let $\V_1$ be a set of size $\log n$ and we index its elements with $\{v_1,\ldots, v_2,\ldots, v_{\log n}\}$. We let $\V_2$ include all other elements and we index them via subsets of $[\log n]$. The graph is again constructed so that $v_{A}\in \V_2$ has an edge to $v_i\in \V_1$ iff $i \in A$. As before, we make the graph bipartite, i.e. both $\V_1$ and $\V_2$ are independent sets.

Now suppose $\log n \ge c\frac{\vcd}{\epsilon^2}\log^2 \frac{d}{\epsilon^2}$.
By \cref{cor:disjoint_improved} we have that there exists a set $A\subseteq \{1,\ldots, \log n\}$, a distribution $p_1$ and $p_2$ where $p_1$ is supported on $A$ and $p_2$ is supported on its compelement so that $\ipm_{\G}(p_1,p_2)<\epsilon$. As before we construct $q_1= \delta v_{A} + (1-\delta)p_1$ and $q_2=\delta v_{A}+ (1-\delta)p_2$. One can verify that $\ipm_{\G}(q_1,q_2)<\epsilon$ but $\ipm_{\G_{k+1}}(q_1,q_2)>\frac{1}{2}$. Thus, if $\ipm_{\G_k}\succ \epsilon \cdot \ipm_{\G_{k+1}}$ then $\log n \le c \frac{\vcd}{\epsilon^2}\log^2 \frac{d}{\epsilon^2}$. In turn $d=\tilde \Omega (\epsilon^2 \log n)$.

\subsection{Proof of \cref{cor:disjoint_improved}}\label{prf:disjoint_improved}

First w.l.o.g we assume that the constant functions are in $\D$ (i.e. $0$ and $1$).

We want to choose a constant $c$ so that if $|S|\ge c\frac{2\vcd}{\epsilon^2}\log^2 \frac{2\vcd}{\epsilon^2}$, then we have
$\frac{|S|}{\ln^2|S|} > \frac{2\vcd\log e}{\epsilon^2}$. Fix such $c>0$, and let $\mathcal{H}_{m} = \{\mathrm{sign}(\sum_{i=1}^m (2d_i(v)-1)):~ d_{i}\in \D\}$ and denote $\mathcal{H}= \mathcal{H}_{\frac{2}{\epsilon^2} \ln |S|}$. Note that
\begin{align*}|\mathcal{H}| &\le |\D|^{\frac{2}{\epsilon^2}\ln |S|}\\
& \le |S|^{\frac{2\vcd}{\epsilon^2}\ln |S|} & \textrm{Sauer's Lemma}\\
& = 2^{\frac{2\vcd\log e}{\epsilon^2} \ln^2 |S|} \\
& < 2^{|S|}
\end{align*}
It thus follows that there exists $f\notin \mathcal{H}$. Let $f$ be such and define a matrix $M=\{0,1\}^{|S|\times |\D|}$ so that

\[M_{v,d}=\begin{cases} 1 & d(v)\ne f(v) \\
0 & \mathrm{else}
\end{cases}\]
Now suppose that for some distribution $q$ over $S$, for every $d$ we have that $\EE_{v\sim q}[d(v)=f(v)]<\frac{1}{2}+\frac{1}{\epsilon}$. Then, defining $q_1=q(\cdot |f(v)=0)$ and $q_2=q(\cdot|f(v)=1)$ yields the desired result.
Indeed,
\begin{align*} \sup_{d\in \D}| \L_{q_1}[d]-\L_{q_2}[d]| &=\sup_{d\in \D}2|\frac{1}{2}\L_{q_1}[d]-\frac{1}{2}\L_{q_2}[d] |\\
&\ge \sup_{d\in \D}2|q(f(v)=1) \L_{q_1}[d]-q(f(v)=-1)\L_{q_2}[d] |-4\max_{y\in \{1,-1\}}\{|\frac{1}{2}-q(f(v)=y)|\\
 &\ge \sup_{d\in \D}2|\EE_{(v,y)\sim q} yd(v) |-4\epsilon\\
 &= \sup_{d\in \D}2|1-2q(d(v)\ne f(v)) |-4\epsilon\\
 & \ge 8\epsilon.
\end{align*}

We now wish to prove that indeed, such a $q$ exists.
Suppose, otherwise: That for any distribution $q$ over $S$ we can find $d$ such that $\EE_{v\sim q}[d(v)=f(v)]>\frac{1}{2}+\frac{1}{\epsilon}$. This can be rephrased in terms of a value of a minimax game as follows:

\[ \max_{q\in \Delta(S)} \min_{d\in \D} q^\top M_{d} < \frac{1}{2}-\epsilon,\]
Where $\Delta(S)$ denotes the set of distributions over $S$.
It is well known (\cite{lipton1994simple}, thm 2), that for any game defined by any matrix $M$ with $c$ columns, there exists a strategy for the row player that chooses uniformly from a multiset of $\frac{\ln c}{2\epsilon^2}$ and achieves $\epsilon$-optimiality.

In our setting, this translate to a uniform distribution $p$, supported on $\frac{\ln |S|}{2\epsilon^2}$ distinguishers $\{d_1,\ldots, d_\frac{\ln |S|}{2\epsilon^2}\}$
such that \[\frac{2\epsilon^2}{\ln |S|}\sum_{d_i}[d_i(v)\ne f(v)] < \frac{1}{2},\] this contradicts the fact that $f\notin  \mathcal{H}$.

We thus obtain that there exists a distribution $q$ over $S$ so that for every $d\in \D$ $\sum q(v)[d(v)\ne f(v)]>\frac{1}{2}-\epsilon$.
\section{Additional Proofs}

\subsection{Proof of \cref{cl:vandermonde}}\label{prf:vandermonde}
Consider the Vandermonde Matrix $V\in M_{k+1,k+1}$ given by $V_{i,j}=\left(\frac{i-1}{k}\right)^{j-1}$. Our first step will be to lower bound the smallest singular value of $V$. In turn, we will obtain a lower bound on the maximum value over the coordinates of the vector $V\aa$. The proof can then be derived from the identity: $(V\aa)_{i}= \sum_{j=1}^{k+1} a_j \left(\frac{i-1}{k}\right)^{j} $.

Let $\lambda_1\le \lambda_2\le\ldots\le \lambda_{k+1}$ be the singular values of $V$.
To bound the smallest singular value, $\lambda_1$, we first observe that $\lambda_{k+1}$-- the highest singular value is bounded by $k+1$. To see that $\lambda_{k+1}\le k+1$, observe that for any vector $\|\aa\|\le 1$ we have that \[\|V\aa\|_2\le k+1\max |V_{i,j}||a_i| \le k+1.\]

Next, using the formula for the determinant of a Vandermonde matrix, and the relation $\det(V)=\prod \lambda_i$, we obtain:
\begin{align*}
\prod_{i=1}^{k+1} |\lambda_i |&= |\det(V)|\\
&= \prod_{1\le i< j\le k+1} \frac{|i-j|}{k}\\
&\ge 2^{-\frac{k(k-1)\log k}{2}}
\end{align*}
Taken together we obtain
\begin{align*}
\lambda_{min}&\ge \frac{2^{-\frac{k(k-1)\log k}{2}}}{\prod_{i=2}^{k+1}\lambda_i}\\
&\ge  \frac{2^{-\frac{k(k-1)\log k}{2}}}{\lambda_{k+1}^k}\\
& \ge 2^{-k(k-1)\log k-k \log (k+1)}\\
&= 2^{-k^2 +k \log k/k+1}\\
& \ge 2^{-2k^2}
\end{align*}

Finally, for any polynomial $p=\sum a_i q^i$ with coefficient $|a_1|$ we have that $\|\aa\|_2\ge |a_1|$. We thus obtain, \begin{align*}\max_{i} |p(\frac{i}{q})| &\ge \frac{1}{\sqrt{k+1}} \sqrt{\sum |p(\frac{i}{q}|)^2}\\
& = \frac{1}{\sqrt{k+1}}\|V\aa\|_2\\
& \ge \frac{1}{\sqrt{k+1}} \lambda_{1} \|\aa\|_2\\
& \ge 2^{-2k^2-1/2\log(k+1)} |a|_1 \\
& \ge 2^{-3k^2} |a|_1
\end{align*}

\ignore{

\subsection{An improved lower bound for $k=1$}\label{sec:k1}
We will show two hard instances for every learning algorithm: We will first show that any algorithm must observe $\Omega( \frac{\log1/\delta}{\epsilon^2})$ sample and then we will show that every discriminator must observe $\Omega(d/\epsilon^2)$ samples. Taken together we obtain the bound we wanted. Thus \cref{thm:samplelower} will follow (for $k=1$) from the following two statements
\begin{lemma}\label{lem:sampledelta}
Let $\D$ be a distinguishing class with $\gvcdim(\D)>1$, then $\D$ has sample complexity at least $\Omega (\frac{\log 1/\delta}{\epsilon^2})$.
\end{lemma}

\begin{lemma}\label{lem:samplevc}
Let $\D$ be a distinguishing class, then $\D$ has sample complexity at least $\Omega (\frac{d}{\epsilon^2})$.
\end{lemma}
\paragraph{Proof of \cref{lem:sampledelta} ($\Omega(\frac{\log 1/\delta}{\epsilon^2})$ lower bound)}
The proof is an imeediate corollary of the following proposition
\begin{claim}[\roi{citation?}]
Suppose $\{\sigma_i\}_{i=1}^m$ is an IID samples of Boolean random variables (i.e. $\sigma\in \{0,1\}$ drawn either from distribution $q$ that is either $p_1$ or $p_2$ where $p_1(\sigma=1)=\frac{1}{2}+\epsilon$ and $p_2(\sigma=1)=1/2$. Let $A$ be an algorithm that observes a sample $S=\{\sigma_i\}_{i=1}^m$ and returns either $p_1$ or $p_2$, Then if $m\le \frac{\log 1/\delta}{\epsilon^2}$:

\[ P(A(S) \ne q) > 1-\delta\]
\end{claim}
With the above claim we can show a sample complexity lower bound for any distinguisher. Indeed pick any two vertices $v_1,v_2\in \V$ such that there is $d\in \D$ for which $d(v_1)\ne d(v_2)$. and consider the distributions $\tilde p_1(v_1)=\frac{1}{2}+\epsilon$ and $\tilde p_1(v_2)=\frac{1}{2}-\epsilon$. Similarly define $\tilde p_2(v_1)=\tilde p_2 (v_2)$. Then $\ipm(\tilde{p_1},\tilde{p_2})>\epsilon$.

On the other hand, if there exists an algorithm $A$ with sample complexity $m(\epsilon,\delta)$ that can distinguish between $\tilde{p_1},\tilde{p_2}$ then one can observe that we can reduce the problem of a bias coin as follows: Given a sequence $\{\sigma_i\}_{i=1}^m$, construct a sample $S_1=\{v^{(i)}\}_{i=1}^m$, where $v^{(i)}=v_1$ if $\sigma_i=1$ and else $v^{(i)}=v_2$. Now given a sequence $S_2$ by choosing $v_1$ and $v_2$ each with probability half, if we can distinguish between the two probabilities we output $p_1$ else, we output $p_2$.

It then follows that $m(\epsilon,\delta)=\Omega(\log1/\delta /\epsilon^2)$.

\paragraph{Proof of \cref{lem:sampelvc} ($\Omega(\frac{d}{\epsilon^2})$ lower bound)}
\begin{figure}
\begin{itemize}
\item Assume a discriminating algorithm $A$ with sample complexity $m(\epsilon,\delta)$ for the class $\D$
\item Draw IID sample $(v_i,y_i)_{i=1}^m$, with $m=10(m(\epsilon,\delta)$.
\item Set $S_1= \{v_i: y_i=1\}$ and $S_{0}=\{v_i=y_i=0\}$. Set as input for $D$ the samples $S_{1}$ and $S_{0}$ and receive $d_{S}^A$.
\item Draw $O(1/\epsilon^2 \log 1/100)$ additional samples and verify if $D(d_{s}^A(v) \ne y)>\frac{1}{2}-\epsilon$ and output $d_{out} =d_S^{A}$. Else, output $d_out= 1-d_s^{A}$.
\end{itemize}
\caption{Reduction from Discrimination to Learning}\label{fig:reduction}
\end{figure}
\begin{lemma}\label{lem:reduction}
Let $P$ be a distribution over labelled examples $(v,y)\in \V\times \{-1,1\}$, and let $\epsilon<1/4$.

 Assume that $\frac{1}{2}-\epsilon \le P(y=1) \le \frac{1}{2}+\epsilon$. Assume we run the algorithm depicted in \cref{fig:reduction}, then with probability at least $2/3$ we have that

\[P [d_{out} (v)\ne y] \le \min_{d\in \D} P [d(v)\ne y] + 9\epsilon.\]
\end{lemma}
The proof for the case $k=1$ follow immediately from the following hardness result, we deter the proof of \cref{lem:reduction} to the end of this section.
\begin{theorem}[see \cite{mohri2018foundations} thm 3.7 within proof]\label{thm:paclower}
Let $\D$ be an a distinguishing class with VC dimension at least $1$. Then for any learning algorithm $A$, there exists a distribution $P$ over $\V\times {0,1}$ such that
\[ P_{S\sim P^m} \left[ P[d_S(v)\ne y] -\min_{d\in \D} P(d(v)\ne y) > \sqrt{\frac{d}{320 m}}\right] >1/64.\]

Further, the distribution $P$ is supported on $d$ points$\{v_1,\ldots, v_d\}$ and is of the form
\[ D((v,y))= \begin{cases}
\frac{1}{d} \frac{1+\rho y\sigma_i}{2} & v= v_i \\
0 & \mathrm{else}
\end{cases},\]
where $\sigma_i \in \{-1,1\}$ and $\rho = O(\frac{1}{\sqrt{m}})$.
\end{theorem}
Before we prove \cref{lem:reduction} we conclude and show how \cref{thm:paclower} and \cref{lem:reduction} indeed prove the statement.

By \cref{lem:reduction} we obtain that if there exists a distinguisher with sample complexity $m(\epsilon,3/4)$ then we can learn any distribution $P$ such that $P(y=1),P(y=0) \le \frac{1}{2}+\epsilon$ with sample complexity of $10 m(\epsilon/9,3/4)+\frac{\log 1/100}{\epsilon^2}$. However \cref{thm:paclower} shows that the sample complexity of learning such distributions is of order $O(\frac{d}{\epsilon^2})$ taken together we obtain that the sample complexity of a distinguisher must be of order $\Omega (\frac{d}{\epsilon^2})$.

\begin{proof}[Proof of \cref{lem:reduction}]
Let us write in shorthand $P_1$ for $P(y=1$ and similarly $P_0$ for $P(y=0$.
Since $P_1> \frac{1}{2}-\epsilon > \frac{1}{4}$, applying Markov's inequality we have that with probability at least $0.99$ the sets $S_1 = \{v_i: y_1=1\}$ and $S_0=\{v_i:y_1=1\}$ have cardinality at least $m(\epsilon,3/4)$.

Assuming $|S_1|,|S_2|> m(\epsilon,3/4)$ we have, by assumption, that w.p. $3/4$:
\begin{align}\label{eq:1} |\EE_{v\sim P(\cdot |y=1)} d_{S}^A(v) -\EE_{v\sim P(\cdot |y=0)}d_{S}^A(v) |\ge
\max_{d\in \D} |\EE_{v\sim P(\cdot |y=1)} d(v) -\EE_{v\sim P(\cdot |y=0)}d(v)|-\epsilon
 \end{align}
 Taken together we obtain that \cref{eq:1} holds with probability $3/4-0.02>2/3$.

 Next, for every $f$:
 \begin{align*}
 &|\EE_{v\sim P(\cdot |y=1)} [f(v)] -\EE_{v\sim P(\cdot |y=0)}[f(v)] |=
 2|\frac{1}{2}\EE [(2y-1)f(v)|y=1] +\frac{1}{2} \EE[(2y-1)f(v)|y=0]|
\end{align*}
Recall that we assume that $\frac{1}{2}-\epsilon \le P_1,P_0\le \frac{1}{2}+\epsilon$, thus exploiting triangular inequality we have that
\begin{align*}
&2|\frac{1}{2}\EE [(2y-1)f(v)|y=1] +\frac{1}{2} \EE[(2y-1)f(v)|y=0]| \\\le &
2|P_1   \EE[(2y-1)f(v)|y=1]+P_0 \EE[(2y-1)f(v)|y=0]|+|P_0-1/2|+|P_1-1/2|
\\=& 2|\EE [(2y-1)f(v)]|+2\epsilon
\\=& 2|1-2P(f(v)\ne y)|+2\epsilon
\end{align*}

A similar argument shows that $|\EE [f(v)|y=1] -\EE[f(v)|y=0] |> 2(1- 2P[f(v)\ne y]) -2\epsilon.$
Next note that w.p. $1/100$, we have that $P(d_{out} (v) \ne y) <\frac{1}{2}+\epsilon$ we thus have (w.p. $(3/4-0.02-0.01)>2/3$:
\begin{align*}
2(1-2P(d_{out}(v)\ne y)) &\ge 2|1-2P(d_{out}(v)\ne y)|-4\epsilon\\
&=2|1-2P(d_{S}^A(v)\ne y)|-4\epsilon\\
&\ge 2|\frac{1}{2}\EE [(2y-1)d_{S}^A(v)|y=1] +\frac{1}{2} \EE[(2y-1)d_{S}^A(v)|y=0]|-6\epsilon\\
&\ge \max_{d\in \D} 2|\frac{1}{2}\EE [(2y-1)d(v)|y=1] +\frac{1}{2} \EE[(2y-1)d(v)|y=0]|-7\epsilon
\\
&\ge 2\max_{d\in \D}|1-2P(d(v)\ne y)| -9\epsilon\\
& \ge 2\max_{d\in \D}(1-2P(d(v)\ne y)) -9\epsilon
\end{align*}
 Finally, we obtain w.p. $2/3$:

\begin{align}\label{eq:2} P[d_{out}(v)\ne y]) |\le
P[d(v)\ne y])+ 9\epsilon.
 \end{align}

\end{proof}}

\end{document}